\documentclass[journal]{IEEEtran}
\usepackage{graphicx}
\usepackage{amsfonts,amssymb}
\usepackage{lipsum}
\usepackage{color}
\usepackage{bm}
\usepackage{tabu}
\usepackage{subeqnarray}
\usepackage{algorithm}
\usepackage{amsthm}
\usepackage{algorithmic}
\usepackage{caption}
\usepackage{mathrsfs}
\usepackage{cite}
\usepackage{epstopdf}
\usepackage{amsmath}
\graphicspath{{figures/}}
\usepackage{booktabs}
\usepackage{diagbox}
\makeatletter

\newcommand{\Rmnum}[1]{\expandafter\@slowromancap\romannumeral #1@}
\makeatother

\hyphenation{op-tical net-works semi-conduc-tor}

\setlength{\abovedisplayskip}{.6mm}
\setlength{\belowdisplayskip}{.6mm}

\begin{document}

\title{Context-Aware Online Learning for Course Recommendation of MOOC Big Data }
\author{\normalsize Yifan Hou, \emph{Member, IEEE}, \normalsize Pan Zhou, \emph{Member, IEEE}, Ting Wang,   Li Yu, \emph{Member, IEEE},   Yuchong Hu, \emph{Member, IEEE}, Dapeng Wu,
\emph{Fellow, IEEE}

\thanks{Yifan Hou, Pan Zhou and Li Yu are with School of Electronic Information and Communications,
Huazhong University of Science and Technology, Wuhan 430074, China.

Ting Wang is with Computer Science and Engineering, Lehigh University,   PA 18015, USA.

Yuchong Hu is with School of Computer Science and Technology,
Huazhong University of Science and Technology, Wuhan, 430074 China.

Dapeng Oliver Wu is with Department of Electrical and Computer Engineering,
University of Florida, Gainesville, FL 32611, USA.

Contacting email: panzhou@hust.edu.cn

This work was supported by the National Science
Foundation of China under Grant 61231010, 61401169, 61529101 and CNS-1116970.
}
}

\maketitle
\thispagestyle{empty}
\pagestyle{plain}

\begin{abstract}
The Massive Open Online Course (MOOC) has expanded significantly in recent years. With the widespread of MOOC, the opportunity to study the fascinating courses for free has attracted numerous people of diverse educational backgrounds all over the world.
In the big data era, a key research topic for MOOC
is how to mine the needed courses in the massive course databases in cloud for each
individual student accurately and rapidly as the number of courses is increasing fleetly. In this respect, the key challenge is
how to    realize personalized course recommendation as well as to reduce the
computing and storage costs  for the tremendous course data.
In this paper, we propose a big data-supported, context-aware online learning-based
 course recommender system that could handle the dynamic and \emph{infinitely} massive
  datasets, which recommends courses by using personalized context information and historical statistics.
The context-awareness takes the personal preferences into consideration, making the recommendation suitable for  people with different
backgrounds.
Besides, the algorithm achieves the sublinear regret performance, which means it can gradually
recommend the mostly preferred and matched  courses to students.
In addition, our  storage module is expanded to the distributed-connected storage nodes, where the
devised algorithm can handle massive course storage problems from
heterogeneous sources of course datasets. Comparing to existing  algorithms, our proposed algorithms achieve  the \emph{linear} time complexity and space complexity.
 Experiment results verify the superiority of our algorithms when comparing with existing ones in the MOOC big data setting.

\end{abstract}

\begin{keywords}
MOOC, big data, context bandit, course recommendation, online learning
\end{keywords}

%
\IEEEpeerreviewmaketitle


\section{Introduction}
MOOC is a concept first proposed in 2008 and known to the world in 2012\cite{In0.1}\cite{In0.2}.
Not being accustomed to the traditional teaching model or being desirous to find a unique learning style, a growing number of people have partiality for learning on MOOCs.
Advanced thoughts and novel ideas give great vitality to MOOC, and over 15 million users have marked in Coursera\cite{Coursera} which is a platform of it.
Course recommender system helps  students to find the requisite courses directly in the course ocean of numerous MOOC platforms such like Coursera, edX, Udacity and so on\cite{In1}.
However,  due to the rapid growth rate of users, the amount of needed courses has been expanding continuously.
And according to the survey about the completion rate of MOOC \cite{4rate}, only 4\% people finish their chosen courses.
Therefore, finding a preferable course resource  and locating it in the massive data bank, e.g., cloud computing and storage platforms, would be a
daunting
``needle-in-a-haystack" problem.


One key challenge in future MOOC course recommendation is processing tremendous data that
 bears the feature of  volume, variety, velocity, variability and veracity\cite{big_data1} of big data. Precisely,
the recommender system for MOOC big data needs to handle the dynamic changing and nearly infinite course data with heterogeneous sources and prior unknown scale   effectively. Moreover, since the Internet and cloud computing services are turning in the direction of
supporting different users around the world, recommender systems are necessary  to consider the features of students, i.e. cultural difference, geographic disparity and education level,  , one has his/her unique preference in
evaluating a course in MOOC. For example, someone pays more attention to the quality of exercises while the other one   focuses on the classroom rhythm more.
We use the concept of $context$ to represent those mentioned features as the students' personalized information.
The context space is encoded as a multidimensional space ($d_X$ dimensions),   where $d_X$ is the number of features.
As such, the recommendation becomes student-specific, which could improve  the recommendation accuracy.
Hence, appending context information to the models  for processing the online courses is  ineluctable\cite{In2}\cite{In3}.

Previous context-aware algorithms such as \cite{ACR} only perform well with the known scale of recommendation datasets.
Specifically, the algorithm in \cite{ACR} would rank  all courses   in MOOC  as leaf nodes, then it clusters some relevance courses together as their parent nodes based on the historical information and current users' features.
The algorithm keeps clustering the course nodes and building their parent nodes  until the root node (bottom-up design).   If there comes a new course, all the nodes are changed and needed to compute again. As for the MOOC big data, since the number of courses keeps increasing and becoming fairly tremendous,    algorithms in \cite{ACR} are prohibitive to be applied.

Our main theme in this paper is   recommending courses in tremendous datasets to students in real-time based on their preferences.
The course data are stored in course cloud and new courses can be loaded at any time.
We devise  a top-down  binary tree to denote and record the process of partitioning course datasets, and every node in the tree is a set of courses.
Specifically, there is only one root course node including all the courses in the binary tree at first.
   The  course scores feedback from students in  marking system are denoted as rewards.
Every time a course is recommended, a reward which is used to improve the next recommending accuracy is fed back from the student.
The reward structure consists as a unknown stochastic function of context features and course features at each
recommendation, and our algorithm concerns the expected reward of
every node in the long run.
Then the course binary tree divides the current node into two child nodes and selects one course randomly in the node with the current best expected value.
It omits most of  courses in the node that would not be selected to greatly improve the learning performance.
 It also supports incoming new courses  to the existing nodes as unselected items without changing the current built tree pattern.


However, other challenges influencing  on the recommending accuracy still remain.
In practice, we observe that the number of courses keeps increasing and the in-memory storage cost of one online
course is about $1GB$ in average, which
 is fairly large.
Therefore, how to store the tremendous course data and how to process the course data effectively become a challenge.
Most previous works \cite{Rb43}\cite{ACR} could only realize the \emph{linear} space complexity, however it's not promising for MOOC big data.
We propose a distributed storage scheme to store the course data with many distributed-connected storage units in the course cloud.
For example, the storage units may be divided based on the platforms of MOOC.
On the one hand, this method can make invoking process effectively with little extra costs on course recommendation.
On the other hand, we prove the space complexity can be bounded \emph{sublinearly} under the optimal condition (the number of units satisfies certain relations) which is much better than\cite{ACR}.

In summary, we propose an effective context-aware  online
learning algorithm   for  course big data recommendation to offer  courses to students in MOOCs. 
The main contributions  are listed as follows:

\begin{itemize}

	\item  The algorithm can accommodate to highly-dynamic increasing course database environments, realizing the real big data support by the course tree that could index nearly infinite and dynamic changing datasets.

	\item  We consider context-awareness for personalized course recommendations, and devise an effective
context partition scheme that greatly improves the learning rate and recommendation accuracy for different   featured students.
	
	\item  Our proposed distributed storage model stores data with distributed units rather than single storage carrier, allowing the system to utilize the course data better and performing well with huge amount of data.

	\item  Our algorithms
enjoy superior time and space complexity.  The time complexity is bounded linearly, which means they achieve a higher learning rate than previous  methods in MOOC.
	For the space complexity, we  prove that it is linear in the primary algorithm and could be sublinear in distributed storage algorithm under the optimal condition.

\end{itemize}


The reminder of the paper is organized as follows.
Section \Rmnum{2} reviews related works and compares with our algorithms.
Section \Rmnum{3} formulates the recommendation problem and algorithm models.
Section \Rmnum{4} and Section \Rmnum{5} illustrate our algorithms and bound their regret.
Section \Rmnum{6} analyzes the space complexity of our algorithms and compares the theoretical results with existing works.
In Section \Rmnum{7}, we verify the algorithms by experiment results and compare with relevant previous algorithms \cite{ACR}\cite{HCT}.
Section \Rmnum{8}  concludes the paper.


\section{Related Works}


A plethora of previous works exist on recommending algorithms.
As for MOOC, two major tactics to actualize the algorithms are filtering-based approaches and online learning methods\cite{R1.1}.
 Apropos of filtering-based approaches, there are some branches such like collaborative filtering\cite{In6}\cite{R1.2}, content-based filtering\cite{R1.4} and hybrid approaches\cite{R1.5}\cite{R1.6}.
%
%
The collaborative filtering approach gathers the students' learning records together and then classifies them into groups based on the characteristics provided, recommending a course from the group's learning records to new students\cite{R1.2}\cite{In6}.
Content-based filtering recommends a course to the student which is relevant to the learning records before\cite{R1.4}.
Hybrid approach is the combination of the two methods.
The filtering-based approaches can perform better at the beginning than online learning algorithms.
However, when the data come to very large-scale or become stochastic, the filtering-based approaches lose the accuracy and become incapable of utilizing the history records adequately.
Meanwhile, not considering the context makes the method unable to recommend courses precisely by taking every student's preference into account.

Online learning can overcome the deficiencies of filtering-based approaches.
Most previous works of recommending  courses utilize the adaptive learning\cite{R2.1}\cite{R2.2}\cite{R2.3}.
In\cite{R2.1}, the CML model was presented. This model combines the cloud, personalized course map and adaptive MOOC learning system together, which is quite comprehensive for the course recommendation with context-awareness. Nevertheless, as for big data, the model is not efficient enough since these works could not handle dynamic datasets and they may have a prohibitively high time cost   with near ``infinite" massive datasets.
Similar works are widely distributed in \cite{Rb42,Rb43,Rb45,Rb46,Rb47} as contextual bandit problems.
In these works, the systems know the rewards of selected ones and record them every time, which means the course feedback can be gathered from students after they receive the recommended courses.
There is no work before that realizes contextual bandits with infinitely increasing datasets.
Our work is motivated from  \cite{RbX} for big data support bandit theory, but \cite{RbX} is not context-aware. We consider
the context-aware online learning for the first time with delicately devised context partition
schemes for MOOC big data. 

\section{Problem Formulation}
\begin{figure}
	\centering
	\includegraphics[scale=.22]{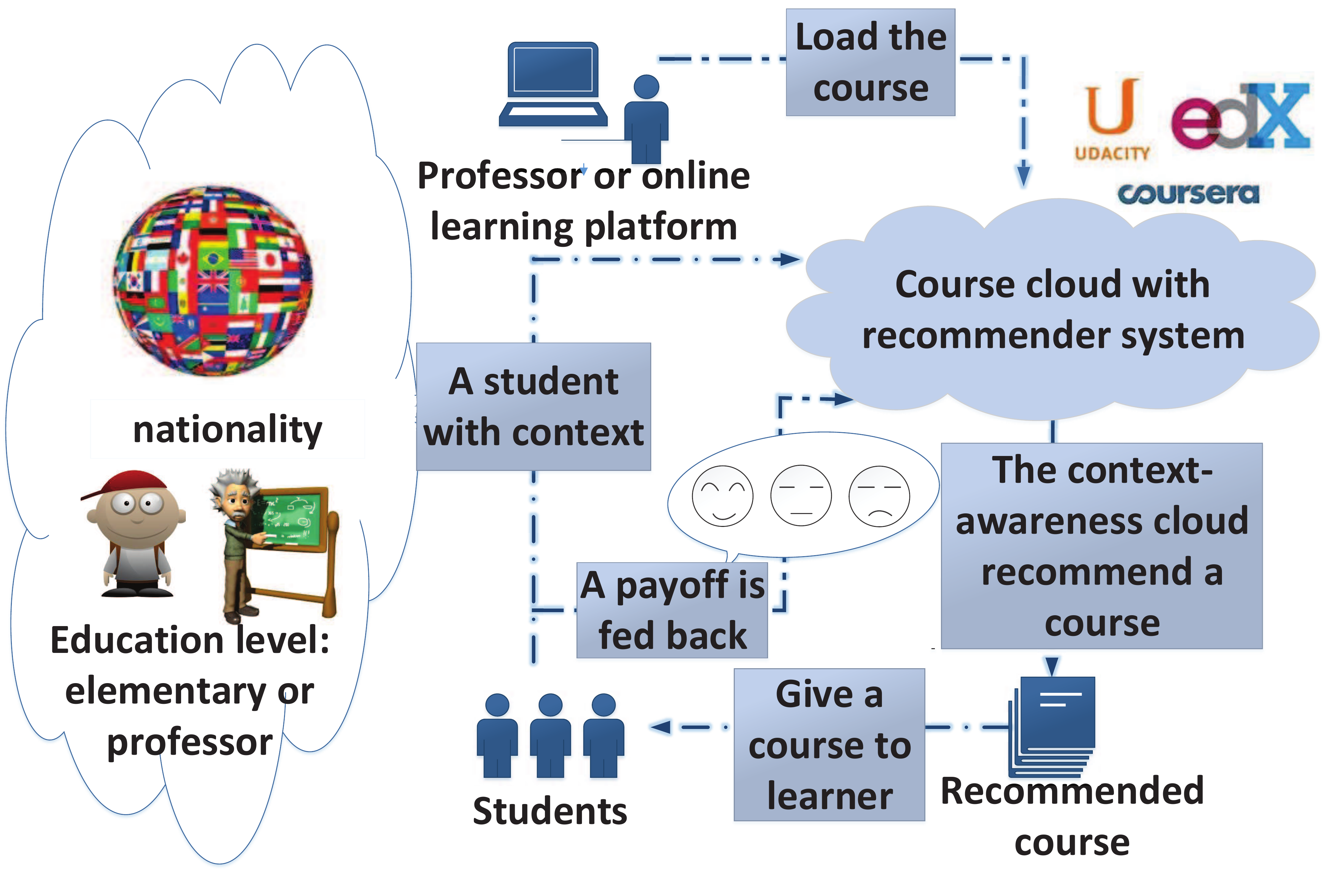}
	\caption{MOOC Course Recommendation and Feedback System}
	\label{fig:digraph1}
\end{figure}
In this section, we present the system model, context model, course model and the regret definition.
Besides, we define some relevant notations and preliminary definitions.



\subsection{System Model}

Fig. 1 illustrates our model of operation.
At first, the professors upload the course resources to the course cloud, where the uploaded courses are indexed by the set $\mathcal{C}=\{ c_1, \, c_2,  ... \}$ whose elements are vectors with dimension $d_C$ representing the number of course features.
As for the users, there are consciously incoming students over time which are denoted as $\mathcal{S} = \{s_1, \, s_2,... \}$.
Then, the system collects context information of students.
We denote the set of context information of students as $\mathcal{X} = \{x_1, \, x_2, \, ...,  x_i, ..., \}$, where $x_i$ is the vector in context
 space $\mathcal{X}$. 

We use time slots $t = 1, \, 2,..., \, T$ to denote rounds.
For simplicity, we use $s_t$, $x_t$, $c_t$ to denote the current incoming student, the student context vector and the recommended course at time $t$.
In each time slot $t$, there are three running states: (1) a student $s_t$ with an exclusive context vector $x_t$ comes into our model; (2) the model recommends a course $c_t$ by randomly selecting one from the current course node to the student $s_t$;
(3) the student $s_t$ provides feedback   due to the newly recommended course $c_t$ to the system.

We assume the context sequence that generating the rewards of courses follows an i.i.d. process, otherwise
if there are mixing  within the sequence in practice, we could use the technique in  \cite{Rdb} by using two i.i.d sequences  to bound the
mixing process without much performance difference.
The  $r_{x_i, c_j}(t)$   denotes the feedback reward from the student with context $x_i$ of course $c_j$ at time $t$.
For the recommending process, first there comes a student $s_k$ with context vector $x_i$.
Then the system recommends a course $c_j$ to the student $s_k$ based on the historical reward information and context vector $x_i$,
after that the student $s_k$ gives a new reward $r_{x_i,c_j}(t)$ to the system.
We define $r_{x_i,c_j}(t)=f(x_i,c_j) + \varepsilon_t $,
where $\varepsilon_t $ is a bounded noise with $\mathbb{E}[\varepsilon_t|(x_i,c_j)]=0$ and $f(x_i,c_j)$ is a function of two variables ($x_i$, $c_j$).
Besides, we normalize the reward as $r_{x_i,c_j}(t) \in [0,1] $.

Fig. 2 illustrates the relationship between context vector $x_i$ and course vector $c_j$ over reward.
To better illustrate the relations, we degenerate the dimensions of them as $d_X  = d_C  = 1$. Practically, we have the reward axis with dimension $1$.
Thus,  we take the context vector and course details as two horizontal axes in a space rectangular coordinate system.
From the schematic diagram in Fig. 2 at time slot $t_0$, the reward varies in the context axis and course axis.
To be more specific, for a determined student $s_{k_0}$ whose context $x_{i_0}$ is unchanged, the reward $r_{x_{i_0},c_{j}}(t_0)$ differs from courses $c_j$ shown in blue plane coordinate system.
On the other hand, for a determined course $c_{j_0}$ shown in crystal plane coordinate system, people with different context $x_{i}$ have different rewards $r_{x_i,c_{j_0}}(t_0)$ of courses.



\begin{figure}
	\centering
	\includegraphics[scale=.2]{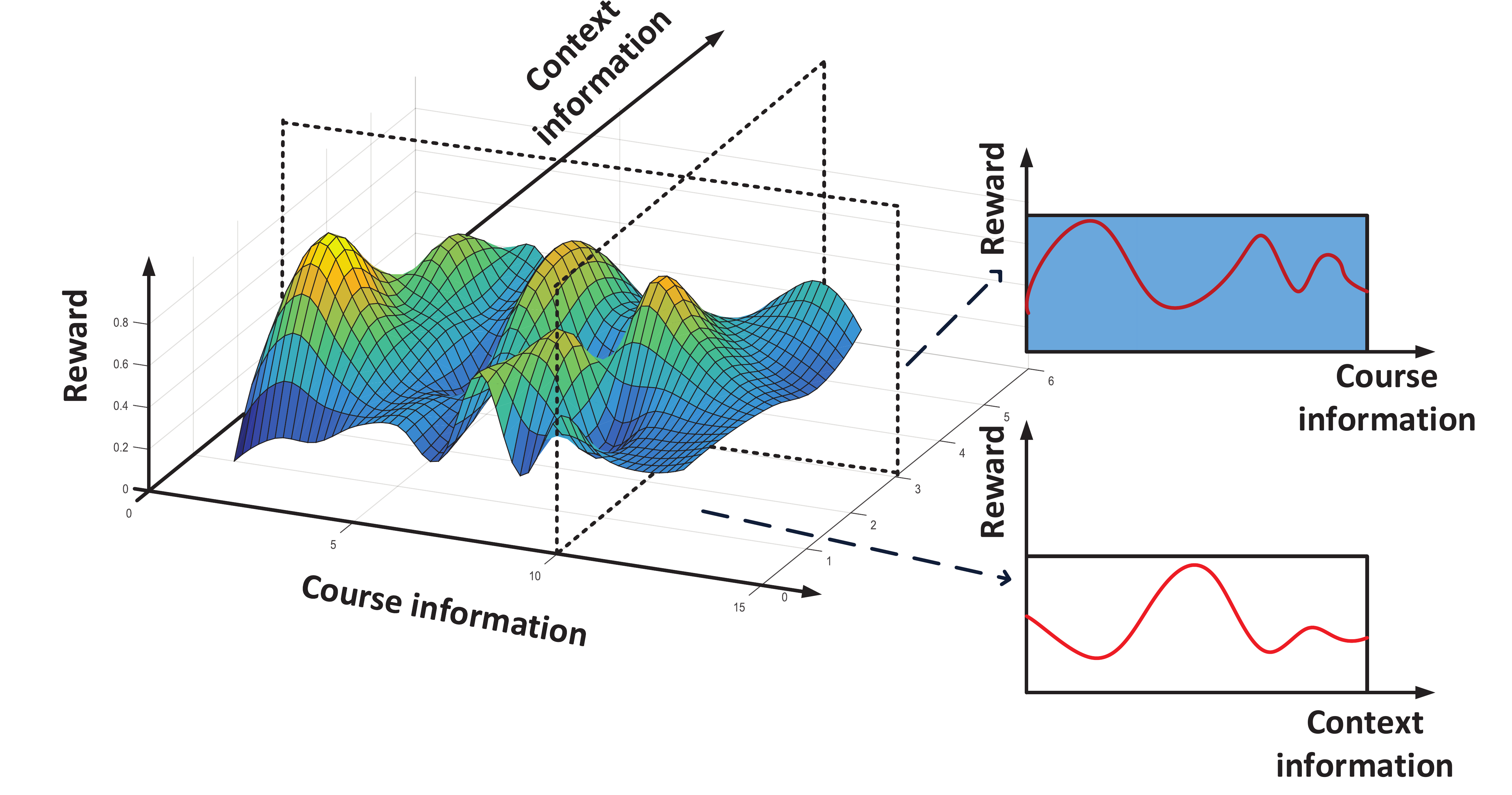}
	\caption{Context and Course Space Relation Schema  at slot $t_0$}
	\label{fig:digraph2}
\end{figure}


\subsection{Context Model for Individualization}

The context space is a $d_X$-dimensional space which means the context $x_i$ $ \in $ $\mathcal{X}$ is a vector with $d_X$ dimensions.
The $d_X$-dimensional vectors encode features such as ages, cultural backgrounds, nationalities, the educational level, etc., representing the characteristics of the student.
We normalize every dimension of context range from 0 to 1, e.g., educational level ranges from $[0,1]$ denoting the educational level from the elementary to  the expert in the related fields.
With the normalization in each dimension, we denote the context space as $\mathcal{X}$ = ${[0,1]^{{d_X}}}$, which is a unit hypercube.
As for the difference between two contexts,
${D_X}$($x_i, x_j$) is used to delegate the dissimilarity between context $x_i$ and $x_j$.
We use the Lipschitz condition to define the dissimilarity.
\newtheorem{myAssumption}{\textbf{Assumption}}
\begin{myAssumption}
	There exists constant ${L_X} > 0$ such that for all context $x_i,x_j \in $ $\mathcal{X}$, we have ${D_X}$($x_i$,$x_j$) $ \le {L_X}{{||x_i  -  x_j|}}{{{|}}^\alpha }$, where $|| \bullet ||$ denotes the Euclidian norm in ${{\cal \mathbb{R}}^{{d_X}}}$.
\end{myAssumption}
Note that the Lipschitz constants $L_X$
are not required to be known by our recommendation algorithms. They will only be used in quantifying the learning algorithms'  performance.
As for the parameter $\alpha$, it's referred to as similarity information\cite{Rb43} and we assume that it's known by the algorithms that
qualify the degree of similarity  among courses.
We present the context dissimilarity mathematically with ${L_X}$ and $\alpha$ and they will appear in our regret bounds.

To illustrate the context information precisely, we define the \emph{slicing number} of context unit hypercube as ${n_T}$, indicating the number of sets in the partition of the context space $\mathcal{X}$.
With the \emph{slicing number} $n_T$, each dimension can be divided into $n_T$ parts, and the context space is divided into $({n_T})^{d_X}$ parts where each part is a $d_X$-dimensional hypercube with dimensions ${1\over{n_T}} \times {1\over{n_T}} \times ... {1\over{n_T}}  $.
To have a better formulation, $\mathcal{P}_T$ = \{${P_1}$,${P_2}$,$...$,${P_{{{({n_T})}^{{d_X}}}}}$\} is used to denote the sliced chronological sub-hypercubes, and we use $P_t$ to denote the sub-hypercube selected at time $t$.
As illustrated in Fig. 3, we let ${d_X} = 3$ and ${n_T} = 2$.
We divide every axis into 2 parts and the number of sub-hypercubes is ${({n_T})^{{d_X}}} = 8$.
For the simplicity, we use the center point $x^{P_t}$ in the sub-hypercube ${P_t}$ to represent the specific contexts $x_t$ at time $t$.
With this model of context, we divide the different users into ${({n_T})^{{d_X}}}$ types.
For simplicity, when $P_t$ is used in the upper right of the notation, it means that the notation is in the sub-hypercube $P_t$ which is selected at time $t$, and the subscript ``$*$" means the optimal solution over that notation.
\subsection{Course Set Model for Recommendation}

We model the set of courses as a ${d_C}$-dimensional space, where $d_C$ is a constant to denote the number of all courses features e.g. language, professional level, provided school in $\mathcal{C}$.
We set every course in $\mathcal{C}$ as a $d_C$ dimensional vector, and for the newly added dimensions of courses, the value is set  as $0$.
Similar to the context, we define the dissimilarity of courses as ${D^{P_t}_C}$($c_i, c_j$) to indicate the farthest relativity between the two courses $c_i$ $c_j$ belonging to any the context vectors $x_i \in P_t$ at time $t$, where the context vector $x_t$ belongs to the context sub-hypercube $P_t$.
\newtheorem{myDefinition}{\textbf{Definition}}
\begin{myDefinition}
	Let ${D^{x_t}_C}$ over $\mathcal{C}$ be a non-negative mapping (${{\cal C}^2}$ $ \to $ ${\cal \mathbb{R}}$):
	${D^{P_t}_C}(c_i, c_j) = \mathop {\sup }\nolimits_{x_i  \in P_t } D_C^{x_i } (c_i ,c_j ) $ ,
	where ${D^{P_t}_C}(c_i,c_j) = {D^{x_t}_C}(c_i,c_j) = 0$ when $i=j$.
\end{myDefinition}
We assume that the two courses which are more relevant have the smaller dissimilarity  between them.
For example, the courses taught both in English have closer dissimilarity than the courses with different languages when concerning the language feature of course.


\begin{figure}
	\centering
 	\includegraphics[scale=0.11]{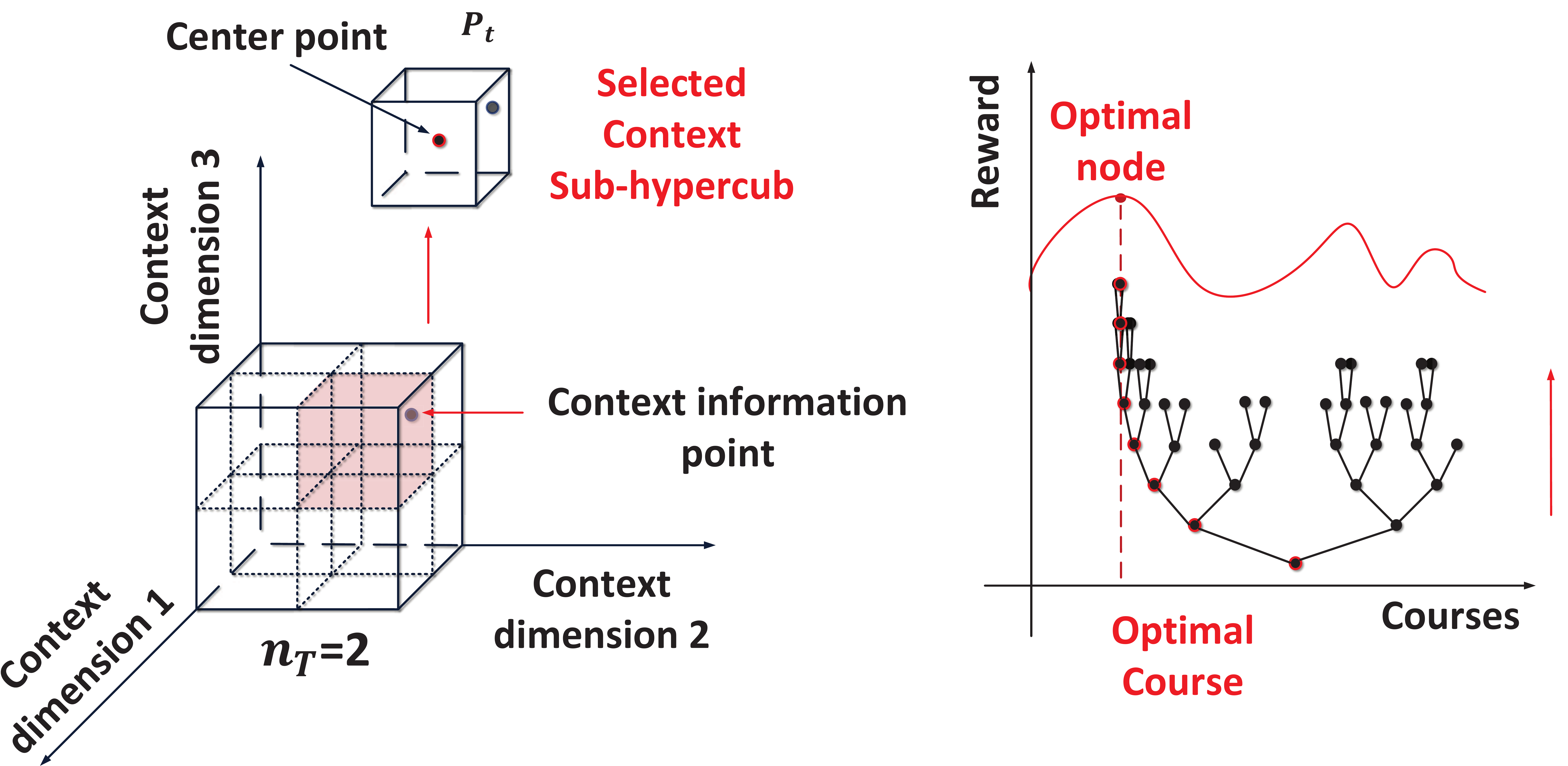}
	\caption{Context and Course Partition Model}
	\label{fig:digraph3}
\end{figure}

As for the course model, we use the binary tree whose nodes are associated with subsets of $\mathcal{X}$ to index the course dataset.
We denote the nodes of courses as $$\{ N_{h,i}^{P_t } |1 \le i \le 2^h ;h = 0,1...; \forall \; P_t  \in P_T \} .$$
Let ${{N}^{P_t}_{h,i}} $ denote the nodes in the depth $h$ and ranked $i$ from left to right in context sub-hypercube $ P_t $ which is selected at time $t$, where the ranked number $i$ of nodes at depth $h$ is restricted by $1 \le i \le {2^h}$.
We let ${\mathcal{N}^{P_t}_{h,i}} \in \mathcal{X} $ represent the course region associated with the node ${N}^{P_t}_{h,i}$.
The region of root node ${N}_{0,1}$ of the binary course tree is a set of the whole courses
${\mathcal{N}^{P_t}_{0,1}} = \mathcal{C}.$
And with the exploration of the tree, the region of two child nodes contains all the courses from their parent region, and they never intersect with each other,
${\mathcal{N}^{P_t}_{h,i}} = {\mathcal{N}^{P_t}_{h+1,2i-1}} \cup {\mathcal{N}^{P_t}_{h+1,2i}},$
${\mathcal{N}^{P_t}_{h,i}} \cap {\mathcal{N}^{P_t}_{h,j}} = \emptyset  $ for any $i \ne j. $
Thus, the $\mathcal{C}$ can be covered by the regions of ${\mathcal{N}^{P_t}_{h,i}}$ at any depth
${\mathcal{C} = }\mathop  \cup \nolimits_1^{{{2}^{h}}} {\mathcal{N}^{P_t}_{h,i}}$.
To better describe the regions, we define the $diam({\mathcal{N}^{P_t}_{h,i}}$) to indicate the size of course regions,
$$diam({\mathcal{N}^{P_t}_{h,i}}) \!=\! \mathop {\sup}\nolimits_{c_i,c_j \in {\mathcal{N}^{P_t}_{h,i}}} \! {D^{P_t}_C}(c_i,c_j){{ }} \; f\!or\;any\; c_i,c_j \! \in \! {\mathcal{N}^{P_t}_{h,i}}.$$
The dissimilarity ${D^{P_t}_C}$($c_i, c_j$)  between courses $c_i$ and $c_j$ can be represented as the gap between course languages, course time length, course types and any others which indicate the discrepancy.
We denote the size of regions $diam({\mathcal{N}^{P_t}_{h,i}})$ with the largest dissimilarity in the course dataset $ {\mathcal{N}^{P_t}_{h,i}}$ for any context $x_i \in {P_t}$.
Note that the $diam$ is based on the dissimilarity, and that can be adjusted by selecting different mappings.
For our analysis, we make some reasonable assumptions as follows.
We define the set $\mathcal{M}=\{ m^{P_1}, \, m^{P_2},...m^{ {({n}_T)}^{d_X} }  \}$ as the parameter to bound the size of regions of nodes in context sub-hypercube ${P_t}$, where all the elements in $\mathcal{M}$ satisfy $m^{P_t} \in (0,1)$.
For simplicity, we take $m$ as the maximum in $\mathcal{M}$, which means $m = \max \{ m^{P_t} | m^{P_t} \in \mathcal{M}\} $.
\begin{myAssumption}
	For any region ${\mathcal{N}^{P_t}_{h,i}}$, there exists constant $\theta \ge 1$, $k_1$ and  $m  $, where we can get $ {k_1\over\theta} {(m)^h}\le diam({\mathcal{N}^{P_t}_{h,i}}) \le k_1{(m)^h}.$
\end{myAssumption}

With Assumption 2 we can bound the size of regions with $k_1{(m)^h},$ which accounts for the maximum possible variation of the reward over ${\mathcal{N}^{P_t}_{h,i}}$.
Due to the properties of binary tree, the number of regions increases exponentially with the depth rising, where using the exponential decreasing term $k_1{(m)^h}$ to bound the size of regions is reasonable.
We use the mean reward $f(x_i, c_j)$ to handle the model.
Based on the concept of the region and reward, we denote the courses in ${\mathcal{N}^{P_t}_{h,i}}  $ as $c^{P_t}_t(h,i)$ at time $t$ in the context sub-hypercube $P_t$.
Since there are tremendous courses and it is nearly impossible to find two courses with equal reward,
for each context sub-hypercube ${P_t}$, there is only one overall optimal course defined as ${P_t}$ as $c^{P_t *}  = \mathop {\arg \max }\nolimits_{c_j  \in \mathcal{C} } f(r_{c_j }^{P_t }) $ and each region ${\mathcal{N}^{P_t}_{h,i}}$ has a local optimal course defined as ${P_t}$ as $c^{P_t *}(h,i)  = \mathop {\arg \max }\nolimits_{c_j  \in {\mathcal{N}_{h,i}^{P_t} } } f(r_{c_j }^{P_t }) ,$
where we let $f(\bullet)$ be the mean value, i.e., $f(r_{x_i,c_j}) = \mathbb{E}[f(x_i,c_j)+\varepsilon_t] = f(x_i,c_j)$ and $r_{c_j}^{P_t}$ means the $r_{x_t,c_j}$ in ${P_t}$.


\subsection{The Regret of Learning Algorithm}
Simply, the regret $\mathbb{R}(T)$ indicates the loss of reward in the recommending procedure due to the unknown dynamics.
As for our tree model, the regret $\mathbb{R}(T)$ is based on the regions of the selected tree nodes ${\mathcal{N}^{P_t}_{h,i}}$.
In other words,  the regret $\mathbb{R}(T)$ is calculated by the accumulated reward difference  between recommended courses $c_t$ and the optimal course $c^{P_t*}$ with context $x_t$ over reward in the context sub-hypercube $P_t$ at time $t$,
thus we define the regret as
\begin{IEEEeqnarray*}{l}
	\!\!\!\!\!\begin{array}{l}
		\mathbb{R}(T) = \sum\limits_{t = 1}^T f({r^{P_t}_{c^{P_t*}}})  - \mathbb{E}\left[\sum\limits_{t = 1}^T  r^{P_t}_{x_t,c_t}(t) \right],
	   \IEEEyesnumber \label{eq:regret}
	\end{array}
\end{IEEEeqnarray*}
where $r_{c^{P_t*}}^{P_t}$ is the reward of optimal course in ${P_t}$ and $r^{P_t}_{x_t,c_t}$ is the reward of course $c_t$ with context $x_t$ in ${P_t}$.
Regret shows the convergence rate of the optimal recommended option.
When the regret is sublinear $   \mathbb{R}(T)  = O {{({T}}^\gamma) }$ where $ 0 < \gamma  < 1 $, the algorithm will finally converge to the best course towards the student.
In the following section we will propose our algorithms with sublinear regret.



\section{Reformational Hierarchical Tree}

In this section we propose our main online learning algorithm to mine courses in MOOC big data.

\subsection{Algorithm of Course Recommendation}


%
\begin{algorithm}
	\caption{Reformational Hierarchical Trees (RHT)}
	\textbf{Require}:
	The constant ${k_1} $ and $ m $, the student's context ${x_t}$ and time $T$.
	
	\textbf{Auxiliary function}: \textbf{Exploration} and \textbf{$\bm{Bound}$ Updating}
	
	\textbf{Initialization}:
	Context sub-hypercubes belonging to  ${\mathcal{P}_T}$\\
	The explored nodes set ${\Gamma ^{{P_t}}}$ = $\{\mathcal{N}_{0,1}^{{P_t}}\}$ \\
	Upper bound of region $\mathcal{N}_{0,1}^{{P_t}}$ over reward $E_{1,i}^{{P_t}} =\infty  $ for $i = 1, 2$.
	\begin{algorithmic}[1]
		\FOR {$t =1,2,...T$}
				\FOR{${d_t} = 0,1,2...{d_X}$}
				\STATE
				Find the context interval in ${d_t}$ dimension
				\ENDFOR
				\STATE
				Get the context sub-hypercube ${P_t}$
		\STATE
		Initialize the current region $\mathcal{N}_{h,i}^{{P_t}} \leftarrow \mathcal{N}_{0,1}^{{P_t}}$
		\STATE
		Build the path set of regions ${\Omega ^{{P_t}}}  \leftarrow \mathcal{N}_{h,i}^{{P_t}}$
		\STATE
		Call \textbf{Exploration ($\Gamma ^{{P_t}}$)}
		\STATE
		Select a course $c_t$ from the region $\mathcal{N}_{h,i}^{{P_t}}$ randomly and recommend to the student $s_t$
		\STATE
		Get the reward ${r_{x_t,c_t}}$
			\FORALL{${P_t} \in {\mathcal{P}_T}$}
			\STATE
			Call \textbf{$\bm{Bound}$ Updating ($\Omega ^{{P_t}}$)}
			\STATE
			$\Omega _{temp}^{{P_t}} \leftarrow {\Omega ^{{P_t}}}$
			\FOR{${\Omega_{temp} ^{{P_t}}} \ne \mathcal{N}_{0,1}^{{P_t}}$ }
			\STATE
			$\mathcal{N}_{h,i}^{{P_t}} \leftarrow $ one leaf of ${\Omega ^{{P_t}}}$
			\STATE
			Refresh the value of $Estimation$ according to (\ref{eq:E})
			\STATE
			Delete the $\mathcal{N}_{h,i}^{{P_t}}$ from ${\Omega_{temp} ^{{P_t}}}$
			\ENDFOR
			\ENDFOR
		\ENDFOR
	\end{algorithmic}
\end{algorithm}

The algorithm is called Reformational Hierarchical Trees (RHT) and the pseudocode is given in Algorithm 1.
We use the explored nodes set $\Gamma^{P_t} = \{ \mathcal{N}_{h_t,i_t}^{{P_t}}| t \in 1, \, 2...T \}$  to denote all the regions whose courses have been recommended in ${P_t}$   and the path set $\Omega^{P_t} =  \{ \mathcal{N}_{h,i}^{{P_t}},\mathcal{N}_{h-1,\left\lceil {{i \over 2}} \right\rceil }^{{P_t}}, \mathcal{N}_{h-2,\left\lceil {{i \over 2^2}} \right\rceil }^{{P_t}}...  \mathcal{N}_{0,1}^{{P_t}} \}$ to show the explored path in ${P_t}$.
Besides, we introduce some new notations, the $Bound$ and the $Estimation$.

We define the $Bound$  $B_{h,i}^{{P_t}}(t)$ as the upper bound reward value of the node $N_{h,i}^{P_t}$ in the depth of $h$ ranked $i$ of the context sub-hypercube ${P_t}$,
\begin{IEEEeqnarray*}{l}
\!\!\!\begin{array}{l}
	  \!B_{h,i}^{{P_t}}(t) \! =\! \hat \mu _{h,i}^{{P_t}}(t) \!+\!\! \sqrt {{k_2}\ln T/{{T}}_{h,i}^{{P_t}}(t)} \! +\! {k_1}{({{{m}}^{{}}})^h}
		\!\! +\!\! {L_{\!X\!}}{({{\sqrt {{d_X}} } \over {{n_T}}})^\alpha }\!\!,
		\IEEEyesnumber \label{eq:B}
	\end{array}
\end{IEEEeqnarray*}
where $k_2$ is a  parameter used to control the exploration-exploitation tradeoff.
And we define the $Estimation$ as the estimated reward value of the node $N_{h,i}^{P_t}$ based on the $Bound$,
\begin{IEEEeqnarray*}{l}
	\!\!\!\!\!\begin{array}{l}
		\! E_{h,i}^{{P_t}}(t) \!=\! \min \! \left\{ B_{h,i}^{{P_t}}(t),max\{ E_{h + 1,2i - 1}^{{P_t}}(t),E_{h + 1,2i}^{{P_t}}(t)\} \right\}.
		\IEEEyesnumber \label{eq:E}
	\end{array}
\end{IEEEeqnarray*}
The role of $E_{h,i}^{{P_t}}(t)$ is to put a tight, optimistic, high-probability upper bound for the reward over the region $\mathcal{N}_{h,i}^{{P_t}}$ of node ${N}_{h,i}^{{P_t}}$ in context sub-hypercube $P_t$ at time $t$.
It's obvious that for the leaf course nodes $N_{h,i}^{P_t}$ we have $E_{h,i}^{P_t}(t)=B_{h,i}^{P_t}(t)$ and for other nodes $N_{h,i}^{P_t}$ we have $E_{h,i}^{P_t}(t) \le B_{h,i}^{P_t}(t)$.

In this algorithm we first find the arrived students' context sub-hypercube $x_t \in {P_t}$ from the context space and replace the original context with the center point ${x^{P_t}}$ in that sub-hypercube ${P_t}$ (line 2-5).
Then the algorithm finds one course region $\mathcal{N}_{h,i}^{P_t}$ whose $E_{h,i}^{{P_t}}(t)$ is highest in the set $\Gamma^{P_t}$ and walks to the region $\mathcal{N}_{h,i}^{P_t}$ with the route $\Omega^{P_t}$, selecting one course $c_t$ from that region and recommending it for the reward $r^{P_t}_{c_t}$ from student $s_t$ (line 7-10). As illustrated in Fig. 4, the algorithm walks upon the nodes with the bold arrow  and the set $\Omega^{P_t} = \{ {N_{0,1}^{P_t}}, {N_{1,2}^{P_t}}, {N_{2,4}^{P_t}}, {N_{3,7}^{P_t}}, {N_{4,13}^{P_t}} \}$, and the node ${N_{4,13}^{P_t}}$ has the highest $Estimation$ value in $\Gamma ^{P_t}$.
When the reward feeds back, the algorithm refreshes $E_{h,i}^{{P_t}}(t)$ of regions of the current tree based on $B_{h,i}^{{P_t}}(t)$ and rewards ${r^{P_t}_{c_t}}(t)$ (line 11-19).
Specifically, the algorithm refreshes the value of $Estimation$ from the leaf nodes to the root node by (\ref{eq:E}) (line 13-18).
Since exploring is a top-down process, after we refresh the upper bound of reward in course regions, we update the $Estimation$ value from bottom to the top based on the $Bound$ with (\ref{eq:E}).
\begin{algorithm}
	\caption{{Exploration}}
	\begin{algorithmic}[1]
		\FORALL {$\mathcal{N}_{h,i}^{{P_t}} \in {\Gamma ^{{P_t}}} $}
		\IF{$E_{h+1,2i-1}^{{P_t}} > E_{h+1,2i}^{{P_t}}$}
		\STATE
		${{Temp }} = 1 $
		\ELSIF{$E_{h+1,2i-1}^{{P_t}} < E_{h+1,2i}^{{P_t}}$}
		\STATE
		${{Temp }} = 0 $
		\ELSE
		\STATE
		${{Temp }} \sim {{ Bernoulli(0}}{{.5)}}$
		\ENDIF
		\STATE
		$\mathcal{N}_{h,i}^{{P_t}} \leftarrow \mathcal{N}_{h + 1,2i - {{Temp}}}^{{P_t}}$
		\STATE
		Select the better region of child node into the path set \\
		${\Omega ^{{P_t}}} \leftarrow {\Omega ^{{P_t}}}  \cup \mathcal{N}_{h,i}^{{P_t}}$
		\ENDFOR
		\STATE
		Add better region of child node into the path set \\
		${\Gamma ^{{P_t}}} \leftarrow \mathcal{N}_{h,i}^{{P_t}} \cup {\Gamma ^{{P_t}}}$
	\end{algorithmic}
\end{algorithm}

\begin{algorithm}
	\caption{${Bound}$ {Updating}}
	\begin{algorithmic}[1]
		\FORALL{$\mathcal{N}_{h,i}^{{P_t}} \in {\Omega ^{{P_t}}}$}
		\STATE
		Refresh selected times ${{T}}_{h,i}^{{P_t}} ++$
		\STATE
		Refresh the average reward according to (\ref{eq:mu})
		\STATE
		Refresh the $Bound$ value on the path according to (\ref{eq:B})
		\ENDFOR
		\STATE
		$E_{h + 1,2i - 1}^{{P_t}} = \infty $, $ E_{h + 1,2i}^{{P_t}} = \infty $
		
	\end{algorithmic}
\end{algorithm}


Algorithm 2 shows the exploration process in RHT.
When we turn to explore new course regions, the model prefers to select the regions with higher $Estimation$ value.
Note that based on (\ref{eq:E}), the parent nodes of the node with the
highest $Estimation$ value also have highest value of the $Estimation$  in their depth, which means for all nodes $N_{h,i}^{P_t} \in \Omega^{P_t}$, we can get that $E_{h,i}^{P_t} = \max\{ E_{h,i'}^{P_t}| 1 \le i' \le 2^h \},$
thus Algorithm 2 can find the node with highest $Estimation$ value.
After the new regions being chosen, they will be taken in the sets $\Gamma ^{{P_t}}$ and $\Omega ^{{P_t}}$ for the next calculation.


\begin{figure}
	\centering
	\includegraphics[scale=.64]{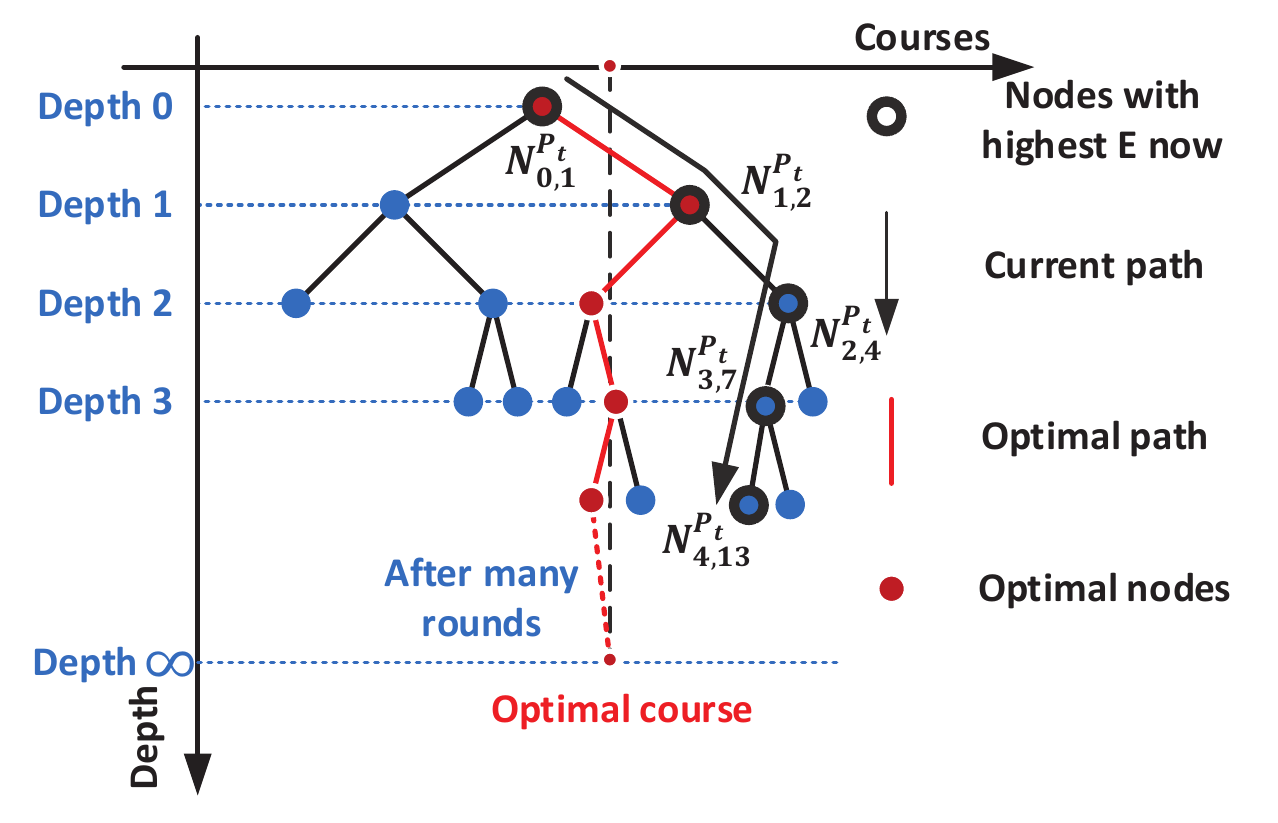}
	\caption{Algorithm Demonstration}
	\label{fig:digraph4}
	\vspace{-.5cm}
\end{figure}

In Algorithm 3,
we define $C(N_{h,i}^{P_t})$ as the set of node $N_{h,i}^{P_t}$ and  its descendants, $$C(N_{h,i}^{P_t}) = N_{h,i}^{P_t} \cup C(N_{h+1,2i-1}^{P_t}) \cup C(N_{h+1,2i}^{P_t}) .$$
And we define $N_{h_t,i_t}^{P_t}$ as the node selected by the algorithm at time $t$.
Then we define $T_{h,i}^{P_t}(t)=\sum\nolimits_t { \mathbb{I} \{N_{h_t,i_t}^{P_t}  \in C({N}_{h,i}^{P_t}) \} } $ as the times that the algorithm has passed by the node $N_{h,i}^{P_t}$, which is equal to the number of selected descendants of $N_{h,i}^{P_t}$ since each node will only be selected once.
We use $B_{h,i}^{{P_t}}(t)$ in (\ref{eq:B})
to indicate the upper bound of highest reward.
The first term  $\hat \mu _{h,i}^{{P_t}}(t) $ is the average rewards, and they come from the students' payoffs as defined
	\begin{IEEEeqnarray*}{l}
		\!\!\!\!\!\begin{array}{l}
			\hat \mu _{h,i}^{{P_t}}(t) = {{(T_{h,i}^{P_t}(t) - 1)\hat \mu _{h,i}^{{P_t}}(t-1) + r_{x_t,c_t}^{{P_t}}}(t) \over {{T_{h,i}^{P_t}(t)}}}.
			\IEEEyesnumber \label{eq:mu}
		\end{array}
	\end{IEEEeqnarray*}
The second one $\sqrt {{k_2}\ln T/{{T}}_{h,i}^{{P_t}}(t)}$ indicates the uncertainty arising from the randomness of the rewards based on the average value.
And the third term $k_1(m)^h$ is the maximum possible variation of the reward over the region $\mathcal{N}_{h,i}^{P_t}$.
As for the last term, since we substitute the sub-hypercube center point for the previous context, we utilize $\max\{D_X(x^{P_t}_i,x^{P_t}_j)\} = \max{ \left\{ {L_X}{{||x_i  -  x_j|}}{{{|}}^\alpha } \right\} } = {L_X}{({{\sqrt {{d_X}} } \over {{n_T}}})^\alpha }$ to denote the deviation in the context sub-hypercube ${P_t}$.

Note that the we only know a part of courses in nodes, uploading   new courses into the cloud would not change the $Estimation$ value and the $Bound $ value (this two is irrelevant with course number), thus the algorithm could hold the past path and explored nodes without recalculating the tree.
Based on this feature, our model can handle the dynamic increasing dataset effectively.
However as for \cite{ACR}, the leaf node is one single course, which means the added courses will change the whole structure of the course tree.
\subsection{Regret Analyze of RHT}
According to the definition of regret in (\ref{eq:regret}), all suboptimal courses which have been selected bring regret.
We consider the regret in one sub-hypercube ${P_t}$ and get the sum of it at last.
Since the regret is the difference between the recommended courses and the best course over reward, we need to define the best course regions at first.
We define the best regions as $\mathcal{N}_{h,{i^*_{h}}}^{{P_t}}$ which contain the best course $c^{P_t*}$ in depth $h$ and optimally ranked $i^*_{h}$ in context sub-hypercube $P_t$ at time $t$.
To illustrate the regret with regions better, we define the best path as
	$\ell _{h,i}^{{P_t}*} =  \{ \mathcal{N}_{h',{i^*_{h'}}}^{{P_t}}| c^{P_t*} \in \mathcal{N}_{h',{i^*_{h'}}}^{{P_t}} \; for \; h'=1, \, 2,...h \}.$

The path is the aggregation of the optimal regions whose depth ranges from $1$ to $h$.
To represent the regret precisely, we need to define the minimum suboptimality gap which indicates the dissimilarity $D_C^{P_t} \left( c^{P_t*}(h,i),c^{P_t*} \right) $ between the optimal course in that region and the overall optimal course $c^{P_t*}$ to better describe the model.

\begin{myDefinition}
	The Minimum Suboptimality Gap is
	\begin{IEEEeqnarray*}{l}
		\!\!\!\!\!\begin{array}{l}
			D_{C(h,i)}^{{P_t}} = f({r^{{P_t}}_{c^{P_t*}}}) - f(r^{P_t}_{c^{P_t*}(h,i)}),
		\end{array}
	\end{IEEEeqnarray*}
	and the Context Gap is
	\begin{IEEEeqnarray*}{l}
		\!\!\!\!\!\begin{array}{l}
			D_{X}^{{P_t}} = \max\{D_X(x^{P_t}_i,x^{P_t}_j)\} = {L_X}{({{\sqrt {{d_X}} } \over {{n_T}}})^\alpha }.
		\end{array}
	\end{IEEEeqnarray*}

\end{myDefinition}

The minimum suboptimality gap of ${{N^{P_t}_{h,i}}}$ is the expected reward defference between overall optimal course and the best one in ${{\mathcal{N}^{P_t}_{h,i}}}$, and the context gap is the difference between the original point and center point in context sub-hypercube ${P_t}$.
As for the context gap, we take the upper bound of it as $\max\{D_X(x^{P_t}_i,x^{P_t}_j)\}$ to bound the regret.

\begin{myAssumption}
	For all courses ${c_j},{c_k} \in {\cal C}$ given the same context vector $x_t$, they satisfy
	\begin{IEEEeqnarray*}{l}
		\!\!\!\!\!\begin{array}{l}
			\;	f(r_{x_t ,c_k } ) \!-\! f(r_{x_t ,c_j } ) \! \le \!  \max \{ f(r_{c^{P_t *} }^{P_t } ) \!-\! f(r_{x_t ,c_k } ),D_C^{P_t } (c_j ,c_k )\} ,
		\end{array}
	\end{IEEEeqnarray*}
	which means
	\begin{IEEEeqnarray*}{l}
		\!\!\!\!\!\begin{array}{l}
			\; f(r_{c^{P_t *} }^{P_t } ) - f(r_{x_t ,c_j } ) \le f(r_{c^{P_t *} }^{P_t } ) - f(r_{x_t ,c_k } )
			\\ \quad\;\;\;\;\;\;\;\;\;\;\;\;\;\;\;\;\;\;\;\;\;\;\;\;\; + \max \{ f(r_{c^{P_t *} }^{P_t } ) - f(r_{x_t ,c_k } ),D_C^{P_t } (c_j ,c_k )\} .
		\end{array}
	\end{IEEEeqnarray*}
\end{myAssumption}
Assumption 3 bounds the difference based on dissimilarity between the optimal course $c^{P_t *}$ and course $c_j$ in context sub-hypercube ${P_t}$ with two terms: (1) the difference between $c^{P_t *}$ and $c_k$; (2) dissimilarity between $c_j$ and $c_k$.
Taking $c_j$, $c_k$ with appropriate values, we could get some useful conclusions presented in the following lemma.
After the definitions and assumptions, we can find a measurement to divide all the regions into two kinds for our following proof.
Based on the Definition 2, we let the set ${\phi ^{{P_t}}}$ to be the $2[{k_1}{({m})^h} + {L_X}{({{\sqrt {{d_X}} } \over {{n_T}}})^\alpha }]$-optimal regions in the depth  $h$,
\begin{IEEEeqnarray*}{l}
	\!\!\!\!\!\begin{array}{l}	
	{\phi ^{{P_t}}} \!\!=\! \bigg\{\! \mathcal{N}_{h,i}^{{P_t}} \big|f({r^{{P_t}}_{c^{P_t*}}}) \!-\! f(r^{P_t}_{c^{P_t*}\!(h,i)}) \!\le\! 2\Big[{k_1}{(m)^h} \!\!+\! {L_{\!X}}{(\!{{\sqrt {{d_X}} } \over {{n_T}}})^\alpha }\Big]  \!\bigg\} \!.
	\end{array}
\end{IEEEeqnarray*}
Note that we call the regions in set ${\phi ^{{P_t}}}$ as optimal regions and those out of it as suboptimal regions.
Besides, we divide the set by depth $h$ which means ${\phi ^{{P_t}}} = \sum\nolimits_h { \phi _h^{P_t } } $, where ${ \phi _h^{P_t }}$ denote the regions in the depth $h$ which are in the set ${\phi ^{{P_t}}} $.

We define the regret when one region is selected above.
Since for every region the algorithm  chooses only once, we can bound the regret after we determine how many regions the algorithm has selected in the recommending process.
Based on definition of $\ell_{h,i}^{P_t*}$ and Definition 2, we assume that the suboptimal regions are divorced from $\ell _{h,i}^{{P_t}*}$ in depth $k$ (in Fig. 4 the depth $k = 2$).
Since we do not know in time $T$ how many times this context sub-hypercube ${P_t}$ has been selected, we use context time $T^{P_t}$ to represent the total times in ${P_t}$.
The sum of $T^{P_t}$ is the total time $\sum\nolimits_{{P_t}} {T^{P_t}}  = T$.

To get the upper bound of the number of suboptimal regions, we introduce Lemma 1 and Lemma 2.
\newtheorem{myLemma}{\textbf{Lemma}}
\begin{myLemma}
	Nodes $ {N}_{h,i}^{{P_t}}$ are suboptimal, and in the depth $k \; (1 \le k \le h - 1)$ the path is out of the best path.
	For any integer $q $, we can get the expect times of the region $\mathcal{N}_{h,i}^{{P_t}}$ and it's descendants in ${P_t}$ are	
\begin{IEEEeqnarray*}{l}
	\!\!\!\!\!\begin{array}{l}	
	\; \mathbb{E}[T_{h,i}^{{P_t}}(T^{P_t}\!)]
	  \!\le\! q \!+\!\!\!\! \sum\limits_{n = q + 1}^{T^{P_t}}\!\!\! {\mathbb{P}\bigg\{\!\! \left[B_{h,i}^{{P_t}}(n) \!>\! f({r^{{P_t}}_{c^{P_t*}}}\!){\;{and}}\;T_{h,i}^{{P_t}}(n) \!>\! q\right]} \\
	\end{array}
\end{IEEEeqnarray*}
\begin{IEEEeqnarray*}{l}
	\!\!\!\!\!\begin{array}{l}	
	  \;\;\;\;\;\;\;\;\;\;\;\; \;\;\;\;\;\;\;\;\;\;\; {{or}}\;\!\! \left[B_{k,{i^{\!*}_{k}}}^{{P_t}}\!(\!n\!) \!\le\! f({r^{{P_t}}_{c^{P_t*}}}\!){\;{f\!or}}\;{{k}} \!\in\! \{ q \!\!+\!\! 1,...,n \!\!-\!\! 1\} \right]\ \!\!\!\! \bigg\}\!.
	\end{array}
\end{IEEEeqnarray*}
\end{myLemma}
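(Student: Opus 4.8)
The plan is to adapt the standard optimistic-tree bookkeeping argument to our context-partitioned setting. Fix the sub-hypercube $P_t$ and the suboptimal node $N_{h,i}^{P_t}$ that leaves the best path $\ell_{h,i}^{P_t*}$ at depth $k$. First I would rewrite the number of passages through the subtree as a sum of indicators, $T_{h,i}^{P_t}(T^{P_t}) = \sum_{n=1}^{T^{P_t}} \mathbb{I}\{N_{h_n,i_n}^{P_t} \in C(N_{h,i}^{P_t})\}$, using the identity for $T_{h,i}^{P_t}(t)$ stated just before the lemma. Then I split each indicator according to whether $N_{h,i}^{P_t}$ is still under-sampled, $\mathbb{I}\{\cdot\} = \mathbb{I}\{\cdot,\, T_{h,i}^{P_t}(n) \le q\} + \mathbb{I}\{\cdot,\, T_{h,i}^{P_t}(n) > q\}$. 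Since every passage through $C(N_{h,i}^{P_t})$ increments $T_{h,i}^{P_t}$ by one, the first of these sums is at most $q$; and since $T_{h,i}^{P_t}(n)\le n$, the second sum has no contribution from $n\le q$, so it may start at $n=q+1$. This produces the leading $q$ in the claimed bound.

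The core step is to show that, for $n \ge q+1$, the event $\{N_{h_n,i_n}^{P_t}\in C(N_{h,i}^{P_t}),\, T_{h,i}^{P_t}(n)>q\}$ is contained in the displayed disjunction, after which taking expectations term by term finishes the proof. To do this I would argue along the path selected at time $n$. Passing through $N_{h,i}^{P_t}$ forces the algorithm, at the divergence depth $k$, to have descended into the ancestor $N_{k,j}^{P_t}$ of $N_{h,i}^{P_t}$ rather than into the optimal node $N_{k,i_k^*}^{P_t}$; by the \textbf{Exploration} rule this requires $E_{k,j}^{P_t}(n) \ge E_{k,i_k^*}^{P_t}(n)$. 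From (\ref{eq:E}), every node on the selected path satisfies $E_{\text{parent}}^{P_t}(n) = \min\{B_{\text{parent}}^{P_t}(n),\, \max\{E_{\text{children}}^{P_t}(n)\}\} \le E_{\text{selected child}}^{P_t}(n)$, so walking from depth $k$ down to $N_{h,i}^{P_t}$ gives $E_{k,j}^{P_t}(n) \le E_{h,i}^{P_t}(n) \le B_{h,i}^{P_t}(n)$, the last step again by (\ref{eq:E}). Hence $B_{h,i}^{P_t}(n) \ge E_{k,i_k^*}^{P_t}(n)$.

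It then remains to lower-bound $E_{k,i_k^*}^{P_t}(n)$ by $f(r_{c^{P_t*}}^{P_t})$ on the complement of the first disjunct. I would prove by backward induction along the optimal sub-path $\ell_{h,i}^{P_t*}$, from the leaf that contains $c^{P_t*}$ up to depth $k$, that if $B_{k',i_{k'}^*}^{P_t}(n) > f(r_{c^{P_t*}}^{P_t})$ for all the depths $k'$ involved, then $E_{k',i_{k'}^*}^{P_t}(n) \ge f(r_{c^{P_t*}}^{P_t})$; the base case is the leaf identity $E=B$ and the inductive step uses $c^{P_t*}\in \mathcal{N}_{k'+1,i_{k'+1}^*}^{P_t}$ together with (\ref{eq:E}). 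Combining with the previous paragraph, whenever the passage event holds either some $Bound$ on the optimal sub-path fails to be over-optimistic at time $n$ (the first disjunct), or $B_{h,i}^{P_t}(n) \ge E_{k,i_k^*}^{P_t}(n) \ge f(r_{c^{P_t*}}^{P_t})$ while $T_{h,i}^{P_t}(n)>q$ (the second disjunct). Summing the probabilities of these events over $n=q+1,\dots,T^{P_t}$ and adding the $q$ from the first paragraph gives exactly the stated inequality.

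The main obstacle, in my view, is making the two structural facts about the \emph{Estimation} airtight: the downward-monotonicity of $E_{h,i}^{P_t}$ along the path actually selected by \textbf{Exploration}, and the ``optimism propagates upward'' induction — both rest on the precise form of (\ref{eq:E}) and on the fact that the walk always enters the child with the larger $Estimation$ (with ties broken by the Bernoulli draw, which does not affect the inequalities). A secondary point is bookkeeping: the substitution of the sub-hypercube centre for $x_t$ contributes the deterministic term $L_X(\sqrt{d_X}/n_T)^\alpha$, but this is already inside $B_{h,i}^{P_t}(n)$ in (\ref{eq:B}), so it needs no separate handling here; likewise the concentration of $\hat\mu_{h,i}^{P_t}(n)$ around $f$ is deliberately \emph{not} resolved in this lemma — it is kept as the probability of $\{B_{h,i}^{P_t}(n) > f(r_{c^{P_t*}}^{P_t}),\, T_{h,i}^{P_t}(n)>q\}$ and dealt with in Lemma 2.
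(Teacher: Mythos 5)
Your proposal is correct and takes essentially the same route as the paper's own proof: you bound the passage event by $\{B_{h,i}^{P_t}(n)\ge E_{k,i_k^*}^{P_t}(n)\}$ via the Exploration rule and (\ref{eq:E}), split the count at the threshold $q$, and decompose that event into the two displayed disjuncts by propagating optimism along the optimal path, where your backward induction is just the contrapositive of the paper's iterated set inclusion (\ref{eq:Lp1.1}). No gaps beyond the paper's own level of rigor.
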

\begin{proof}
We assume that the path is out of the best in the depth of $k$.
Since the selected path is out of the optimal path in depth $k$ and the algorithm select the regions with higher $Estimation$ value, we can know that $E_{k ,{i^{*}_{k}}}^{{P_t}}(n) \le E_{k ,i_k }^{{P_t}}(n)$, where the first $Estimation$ value is for the best path region and the second one is for the region selected in the depth of $k$.
	According to (\ref{eq:E}), we can know that $E_{k ,i_k}^{{P_t}}(n) \le E_{k+1,i_{k+1}}^{{P_t}}(n)$,
	then we could get that $E_{k ,{i_{k}^{*}  }}^{{P_t}}(n) \le E_{k ,i_k}^{{P_t}}(n) \le E_{h,i}^{{P_t}}(n) \le B_{h,i}^{{P_t}}(n).$
We define $\{ N_{h_t,i_t}^{P_t} \in C(N_{hi,i}^{P_t}) \}$ as the event that the algorithm passes from the root node by the node $N_{h,i}^{P_t}$.
Obviously, we can get that
$\{ N_{h_t,i_t}^{P_t} \in C(N_{hi,i}^{P_t}) \} \subset \{ B_{h,i}^{{P_t}}(n) \ge E_{k ,{i_{k}^*}}^{{P_t}}(n)\}  .$
So we can bound the time when $C(N_{h,i}^{P_t})$ has been selected as
\begin{IEEEeqnarray*}{l}
		\!\!\begin{array}{l}
        \mathbb{E}[T_{h,i}^{{P_t}}(T^{P_t})] \le \sum\nolimits_{t = 1}^{T^{P_t } } {P\{ B_{h,i}^{P_t } (n) \ge E_{k,i_k^* }^{P_t } (n)\} }  .
        \IEEEyesnumber \label{eq:pp1}
        \end{array}
	   \end{IEEEeqnarray*}
We divide the set $\{ B_{h,i}^{{P_t}}(n) \ge E_{k ,{i_{k}^*}}^{{P_t}}(n)\} $ into
	$ \{ B_{h,i}^{{P_t}}(n) > f({r^{{P_t}}_{c^{P_t*}}})\}  \cup \{ f({r^{{P_t}}_{c^{P_t*}}}) \ge E_{k,{i^*_{{{k} }}}}^{{P_t}}(n)\}. $
	According to the (\ref{eq:E}) once again, we can get
	\begin{IEEEeqnarray*}{l}
		\!\!\begin{array}{l}
			\left\{ f({r^{{P_t}}_{c^{P_t*}}}) \ge E_{k,{i^*_{{{k} }}}}^{{P_t}}(n) \right\}
			\\  \,\,\,\,\, \subset  \left\{ f({r^{{P_t}}_{c^{P_t*}}}) \ge B_{k ,{i^*_{{{k} }}}}^{{P_t}}(n)\right\}
			\cup \left\{ f({r^{{P_t}}_{c^{P_t*}}}) \ge E_{k + 1,{i^*_{{{k + 1}}}}}^{{P_t}}(n)\right\}.
			\IEEEyesnumber \label{eq:Lp1}
		\end{array}
	\end{IEEEeqnarray*}
	From ($\ref{eq:Lp1}$) we  find that the set $\{ f({r^{{P_t}}_{c^{P_t*}}}) \ge E_{k,{i^*_{{{k} }}}}^{{P_t}}(n) \}$ can be divided into two parts, and we notice that $\{ f({r^{{P_t}}_{c^{P_t*}}}) \ge E_{k + 1,{i^*_{{{k + 1}}}}}^{{P_t}}(n)\}$ is similar to $\{ f({r^{{P_t}}_{c^{P_t*}}}) \ge E_{k,{i^*_{{{k} }}}}^{{P_t}}(n) \}$, thus we can keep dividing the set until the depth comes to $k$.
	Hence, we  obtain
	\begin{IEEEeqnarray*}{l}
		\!\!\!\!\!\begin{array}{l}
			\left\{ B_{h,i}^{{P_t}}(n) \ge E_{k ,{i_{k}^*}}^{{P_t}}(n)\right\} \subset \left\{ B_{h,i}^{{P_t}}(n) > f({r^{{P_t}}_{c^{P_t*}}})\right\}
			\\ \quad \quad\quad\quad\quad\quad\quad\quad \,\,\,\,\,\,\, \,\,\,\,\,\, \mathop  \cup \limits_{j = k + 1}^{n - 1} \left\{ f({r^{{P_t}}_{c^{P_t*}}}) \ge B_{j,{i^*_{{{j} \!\! }}}}^{{P_t}}(n)\right\}.
			\IEEEyesnumber \label{eq:Lp1.1}
		\end{array}
	\end{IEEEeqnarray*}
	We introduce an integer $q$ to divide (\ref{eq:pp1}) further. As for any $q $, we have
	\begin{IEEEeqnarray*}{l}
		\!\!\!\!\!\begin{array}{l}
			\; \mathbb{E}\!\left[T_{h,i}^{{P_t}}(T^{P_t}\!)\right]
			\!=\! \sum\limits_{n = 1}^{T^{P_t}} {\mathbb{P}\left\{ B_{h,i}^{{P_t}}(n) \ge E_{k ,{i^*_{{{k } }}}}^{{P_t}}(n),T_{h,i}^{{P_t}}(n) \le q\right\} }
			\\\;\;\;\;\;\;\;\;\;\;\;\;\;\;\;\;\;\;\;\;\;\;\;\;\; +\! \sum\limits_{n = 1}^{T^{P_t}} {\mathbb{P}\left\{ B_{h,i}^{{P_t}}(n) \ge E_{k ,{i^*_{{{k } }}}}^{{P_t}}(n),T_{h,i}^{{P_t}}(n) > q\right\} } 	
			\\\;\;\;\;\;\;\;\;\;\;\;\;\;\;\;\;\;\;\;\;\; \le q \!+\!\!\!\! \sum\limits_{n = q + 1}^{T^{P_t}} \!\!\!\!{\mathbb{P}\Big\{\!\! \left[\!B_{h,i}^{{P_t}}(n) \!>\! f({r^{{P_t}}_{c^{P_t*}}}\!){\;{and}}\;T_{h,i}^{{P_t}}(n) \!\!>\! q  \right]}
			\\\;\;\;\;\;\;\;\;\;\;\;\;\;\;\;\;\;\;\;\;\;\;\;\;\;{{or}}\;\!\! \left[\! B_{j,{i^*_{j}}}^{{P_t}}\!(n) \!\!\le\!\! f({r^{{P_t}}_{c^{P_t*}}}\!){\;{\!f\!or\!}}\;{{j}} \!\! \in \!\! \{  q \!+\! 1,...,\!n \!-\! 1\! \}\! \right]\!\! \Big\}.
		\end{array}
	\end{IEEEeqnarray*}
	In the inequation, we let the event in first term happens all the times so the probability is equal to $1$, and the sum of them is equal to $q$.
	In the second term, since the $T_{h,i}^{{P_t}}(n) > q$, the terms when $n \le q$ are zero and with the help of inequation ($\ref{eq:Lp1.1}$) we can get the conclusion.	
\end{proof}

We determine the threshold of the selected times of the nodes in $C({N}_{h,i}^{{P_t}})$ by Lemma 1.
However, from Lemma 1 we decompose the $\mathbb{E}[T_{h,i}^{P_t}]$ with the sum of events, which means we cannot get the upper bound of $\mathbb{E}[T_{h,i}^{P_t}]$ directly, thus we introduce Lemma 2 to bound $\mathbb{E}[T_{h,i}^{P_t}]$ with the deviation of contexts and courses based on Lemma 1.


\begin{myLemma}
	For the suboptimal regions $\mathcal{N}_{h,i}^{{P_t}} $	,
	if $q$ satisfies
	\begin{IEEEeqnarray*}{l}
		\!\!\!\!\!\begin{array}{l}
			q \ge {{4{k_2}\ln T} \over {{{\left[D_{C(h,i)}^{{P_t}} - {k_1}{{({m})}^h} - {L_X}{{({{\sqrt {{d_X}} } \over {{n_T}}})}^\alpha }\right]}^2}}},
			\IEEEyesnumber \label{eq:L3}
		\end{array}
	\end{IEEEeqnarray*}	
	Then for all $T^{P_t} \ge 1,$ we can get the expected times that node $N_{h,i}^{P_t}$ has been selected as
	\begin{IEEEeqnarray*}{l}
		\!\!\!\!\!\begin{array}{l}
			\mathbb{E}[T_{h,i}^{{P_t}}(T^{P_t})] \le {{4{k_2}\ln T} \over {{{\left[{k_1}{{({m})}^h} + {L_X}{{({{\sqrt {{d_X}} } \over {{n_T}}})}^\alpha }\right]}^2}}} + M,
			\IEEEyesnumber \label{eq:L3.1}
		\end{array}
	\end{IEEEeqnarray*}
where the $M$ is a constant less than 5.
\end{myLemma}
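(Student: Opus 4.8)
The plan is to feed the decomposition of Lemma 1 with a carefully chosen value of the free integer $q$ and then show that the residual tail sum is an absolute constant. I would take $q$ to be the smallest integer satisfying the hypothesis (\ref{eq:L3}), so that $q \le 4k_2\ln T/(D_{C(h,i)}^{P_t} - k_1(m)^h - D_X^{P_t})^2 + 1$. Because $\mathcal{N}_{h,i}^{P_t}$ is a \emph{suboptimal} region it lies outside $\phi^{P_t}$, hence $D_{C(h,i)}^{P_t} > 2[k_1(m)^h + D_X^{P_t}]$ and therefore $D_{C(h,i)}^{P_t} - k_1(m)^h - D_X^{P_t} > k_1(m)^h + D_X^{P_t} > 0$; squaring and inverting this inequality already yields the leading term $q \le 4k_2\ln T/(k_1(m)^h + D_X^{P_t})^2 + 1$ of (\ref{eq:L3.1}). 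It then remains to prove that the two probabilities occurring in Lemma 1, summed over $n = q+1,\dots,T^{P_t}$, contribute at most a constant strictly below $4$; the claimed $M < 5$ is then this constant plus the $+1$ lost to rounding.

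Next I would bound the first bad event $\{B_{h,i}^{P_t}(n) > f(r^{P_t}_{c^{P_t*}})\ \text{and}\ T_{h,i}^{P_t}(n) > q\}$. On this event, expanding $B_{h,i}^{P_t}(n)$ via (\ref{eq:B}) and using $T_{h,i}^{P_t}(n) > q$ makes the confidence radius $\sqrt{k_2\ln T/T_{h,i}^{P_t}(n)}$ strictly smaller than $\tfrac{1}{2}(D_{C(h,i)}^{P_t} - k_1(m)^h - D_X^{P_t})$. Combining this with Assumption 2 (the reward varies by at most $k_1(m)^h$ over $\mathcal{N}_{h,i}^{P_t}$), Assumption 3 and the definition $D_{C(h,i)}^{P_t} = f(r^{P_t}_{c^{P_t*}}) - f(r^{P_t}_{c^{P_t*}(h,i)})$ — which together place the conditional mean of the rewards averaged into $\hat\mu_{h,i}^{P_t}(n)$ within $k_1(m)^h + D_X^{P_t}$ of $f(r^{P_t}_{c^{P_t*}(h,i)})$ — the event forces $\hat\mu_{h,i}^{P_t}(n)$ to exceed its conditional mean by more than its own confidence radius. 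Since the rewards lie in $[0,1]$ and the noise is conditionally zero-mean, the centered rewards contributing to $\hat\mu_{h,i}^{P_t}$ form a bounded martingale difference sequence, so Azuma--Hoeffding bounds this (for a fixed number of samples) by $2T^{-2k_2}$, i.e. by a quantity of order $T^{-c}$ with $c$ made as large as desired by the choice of $k_2$. A union bound over the admissible values $\{q+1,\dots,n\}$ of the random counter $T_{h,i}^{P_t}(n)$ costs only a factor $n$, and summing over $n \le T^{P_t} \le T$ gives a constant.

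For the second bad event $\{B_{k,i_k^*}^{P_t}(n) \le f(r^{P_t}_{c^{P_t*}})\ \text{for some relevant depth}\ k\}$ — the optimistic bound somewhere on the best path underestimates the optimal reward — I would argue symmetrically: by (\ref{eq:B}) the bound $B_{k,i_k^*}^{P_t}(n)$ adds $k_1(m)^k + D_X^{P_t}$ on top of $\hat\mu_{k,i_k^*}^{P_t}(n)$ plus the confidence radius, while $f(r^{P_t}_{c^{P_t*}})$ sits within $k_1(m)^k + D_X^{P_t}$ of the conditional mean on the optimal region $\mathcal{N}_{k,i_k^*}^{P_t}$ (Assumptions 2 and 3 again). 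Hence the event implies a one-sided concentration failure of $\hat\mu_{k,i_k^*}^{P_t}(n)$ below its mean, handled by the same Azuma--Hoeffding estimate, with union bounds over the sample count and over the $O(n)$ depths $k$; the sum over $n$ is again a constant. Adding the two constants and the rounding overhead produces $M < 5$ and therefore (\ref{eq:L3.1}).

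The main obstacle is not the algebra but the randomness of the counter $T_{h,i}^{P_t}(n)$: the concentration inequality cannot be invoked with a deterministic sample size, so one must peel over all admissible values of the counter, which inflates each per-round estimate by a factor $n$; the exponent in the tail bound — and hence the constant $k_2$ — must then be taken large enough that $\sum_{n\le T} n\,T^{-c}$ still stays below an absolute constant, and one must verify that this constant together with the $+1$ from rounding $q$ up is genuinely below $5$. Pinning down those constants, and checking that Assumptions 2 and 3 are exactly strong enough to confine the conditional means to the windows $k_1(m)^h + D_X^{P_t}$ used above, is where the care is needed.
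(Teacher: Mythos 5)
Your proposal is correct and follows essentially the same route as the paper's Appendix A: feed Lemma 1 with $q$ chosen at the threshold (\ref{eq:L3}), bound each of the two bad events by Hoeffding--Azuma after peeling over the random counter $T_{h,i}^{P_t}(n)$ (and the depths on the optimal path), sum the resulting polynomially small tails into the constant $M$, and use the suboptimality condition $D_{C(h,i)}^{P_t} > 2[k_1(m)^h + L_X(\sqrt{d_X}/n_T)^{\alpha}]$ to replace the gap-dependent denominator by $[k_1(m)^h + L_X(\sqrt{d_X}/n_T)^{\alpha}]^2$. If anything, you state that last substitution and the requirement on $k_2$ for the tail sums to stay constant more explicitly than the paper does.
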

\begin{proof} See appendix A.
\end{proof}
We use the deviation of context and course to represent played times in this lemma.
Practically speaking, we find a upper bound for the  times of suboptimal regions $\mathbb{E}[T_{h,i}^{{P_t}}]$, which means we can determine one region's regret during the process.
But this is not sufficient to bound the whole regret, what we also have to know  is the number of optimal regions.
As mentioned above, we divide the regions into two parts based on the course model as
${\Gamma ^{{P_t}}} = {\phi ^{{P_t}}} \cup {({\phi ^{{P_t}}})^c}$, where $(\bullet)^c$ means the complementary set.
For the convenience, we use the sets of depth to illustrate the region sets
\begin{IEEEeqnarray*}{l}
	\!\!\!\!\!\begin{array}{l}
		{\Gamma ^{{P_t}}} = \left[\sum\nolimits_h {\phi _h^{{P_t}}} \right] \cup \left[\sum\nolimits_h {{{(\phi _h^{{P_t}})}^c}} \right].
		\IEEEyesnumber \label{eq:L3.1p}
	\end{array}
\end{IEEEeqnarray*}
We define the packing number as $\kappa _h^{{P_t}}\left( \cup \mathcal{N}_{h,i}^{{P_t}},Ra\right)$ to show the minimum number of packing balls whose radius is $Ra$ covering optimal regions composed of $\cup \mathcal{N}_{h,i}^{{P_t}}$, where $K$ is the constant of the whole space size, $Ra$ is the packing balls' radius and $d'$ is the dimension of the packing ball.
\begin{myAssumption}
	We assume that there exists a constant $K_0$, that for all the regions of nodes in the depth of $h$, we can get the packing number
	\begin{IEEEeqnarray*}{l}
		\!\!\!\!\!\begin{array}{l}
				\kappa _h^{{P_t}}\left( \cup \mathcal{N}_{h,i}^{{P_t}},Ra\right) = {K_0 \over {{{\left[Ra\right]}^{{d'}}}}}.
			\IEEEyesnumber \label{eq:L3.2}
		\end{array}
	\end{IEEEeqnarray*}
\end{myAssumption}
From this assumption, we could make sure that all the courses in the regions $\{\cup \mathcal{N}_{h,i}^{{P_t}}\}$ can be covered by the packing ball whose radius is $Ra$.
And as for the optimal nodes regions, we could use the packing balls and the radius to bound the regret of them.
\begin{figure}
	\centering
	\includegraphics[scale=.44]{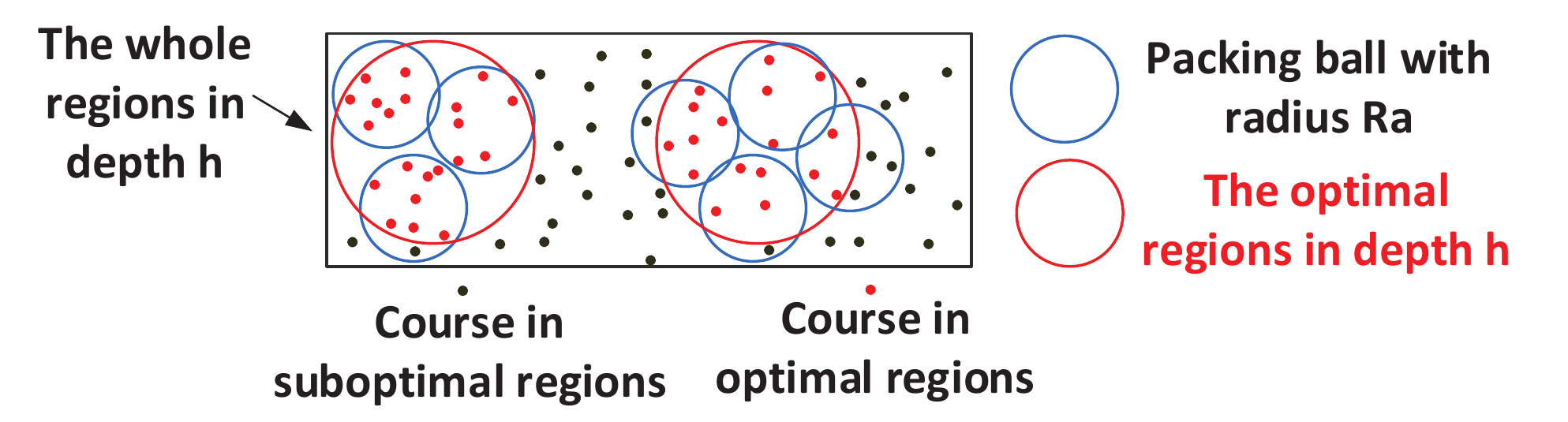}
	\caption{Distributed Storage based on Binary Tree in Cloud}
	\label{fig:digraphpackingball}
\end{figure}
In Fig. 5, we take the dimension of packing ball the same as the course regions as $2$ thus we can illustrate all the courses by dots in black square (plane).
We use the red dot to denote the courses in the optimal regions and black dot to denote the courses in the suboptimal regions in depth $h$.
As shown, we could use the number of packing balls to cover all the courses in the course regions,
which means the number of optimal regions in depth $h$ can be bounded with the number of packing balls with the constant $K_0$ and $\theta$.

With Assumption 4, we introduce Lemma 3 to bound the number of optimal regions in depth $h$ with the number of packing balls.
\begin{myLemma}
	In the same context sub-hypercube ${P_t}$, the number of the $2\left[{k_1}{({m})^h} + {L_X}{({{\sqrt {{d_X}} } \over {{n_T}}})^\alpha }\right]$-optimal regions can be bounded as
	\begin{IEEEeqnarray*}{l}
		\!\!\!\!\!\begin{array}{l}
				\left|\phi _h^{{P_t}}\right| \le K{\left[{k_1}{({m})^h} + {L_X}{({{\sqrt {{d_X}} } \over {{n_T}}})^\alpha }\right]^{{-d_C}}}\!\!.
			\IEEEyesnumber \label{eq:L4}
		\end{array}
	\end{IEEEeqnarray*}
\end{myLemma}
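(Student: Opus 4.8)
The plan is to count the depth-$h$ optimal cells by a packing argument, in the spirit of the near-optimality-dimension bounds used for hierarchical bandits. First I would localize $\phi_h^{P_t}$: by definition every cell $\mathcal{N}_{h,i}^{P_t}\in\phi_h^{P_t}$ has local optimal course $c^{P_t*}(h,i)$ whose reward is within $2[k_1(m)^h+L_X(\sqrt{d_X}/n_T)^\alpha]$ of $f(r_{c^{P_t*}}^{P_t})$, so applying Assumption 3 with $c_k=c^{P_t*}(h,i)$ and $c_j$ ranging over the courses of the cell, and using the diameter upper bound $diam(\mathcal{N}_{h,i}^{P_t})\le k_1(m)^h$ from Assumption 2, I would conclude that every course contained in some cell of $\phi_h^{P_t}$ lies within course-dissimilarity $O(k_1(m)^h+L_X(\sqrt{d_X}/n_T)^\alpha)$ of the per-cube optimal course $c^{P_t*}$. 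Hence the union $U_h:=\bigcup_{\mathcal{N}_{h,i}^{P_t}\in\phi_h^{P_t}}\mathcal{N}_{h,i}^{P_t}$ sits inside a dissimilarity-ball of that radius around $c^{P_t*}$.

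Next I would invoke Assumption 4 on $U_h$ with packing radius $Ra$ chosen as a fixed fraction of the guaranteed minimum cell size $\tfrac{k_1}{\theta}(m)^h$ coming from the lower bound in Assumption 2 --- this is exactly where $\theta$ enters the final constant. Since the depth-$h$ cells are pairwise disjoint by the binary-tree construction and each one has diameter at least $\tfrac{k_1}{\theta}(m)^h$, no packing ball of radius $Ra$ can be ``shared'' by two distinct cells (a ball of diameter below a single cell's diameter cannot straddle the cores of two disjoint cells), so the number of optimal cells is at most the packing number $\kappa_h^{P_t}(U_h,Ra)=K_0/[Ra]^{d'}$. Taking $d'=d_C$, and folding $K_0$, $\theta$, and the chosen fraction into a single constant $K$ (the ``whole-space-size'' constant of the statement) together with the localization radius from the first step, I would obtain $|\phi_h^{P_t}|\le K[k_1(m)^h+L_X(\sqrt{d_X}/n_T)^\alpha]^{-d_C}$.

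I expect the main obstacle to be the localization step. Assumption 3 is written so as to bound a reward gap by a dissimilarity (the ``easy'' Lipschitz direction), whereas localizing $U_h$ really needs the reverse-type control --- that courses far in dissimilarity from $c^{P_t*}$ are strictly suboptimal. Making this quantitative (or, alternatively, reading Assumption 4 directly as the primitive that already encodes the geometry of $U_h$, so that this step can be bypassed) and then reconciling the scales --- so that the covering-ball radius is tied to $k_1(m)^h+L_X(\sqrt{d_X}/n_T)^\alpha$ and the exponent comes out on that quantity rather than on $(m)^h$ alone --- is the delicate bookkeeping I would be most careful about. A secondary subtlety is the passage from ``few balls cover $U_h$'' to ``few disjoint cells fit inside $U_h$'', which strictly speaking requires the cells to be fat, i.e.\ to contain inscribed balls of radius proportional to their diameter --- a mild strengthening of the lower bound in Assumption 2 that I would state explicitly if needed.
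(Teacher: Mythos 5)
Your overall route---count the depth-$h$ optimal cells by the packing number of their union via Assumption 4, take $d'=d_C$, and absorb $\theta$ into the constant $K$---is the same route the paper takes. The one concrete place where your version does not reach the stated bound is exactly the point you flagged as delicate bookkeeping: you choose the packing radius as a fixed fraction of the minimum cell size $\frac{k_1}{\theta}(m)^h$, which yields a count of order $\left[k_1(m)^h\right]^{-d_C}$, i.e.\ the exponent lands on $(m)^h$ alone. Since $\left[k_1(m)^h\right]^{-d_C}\ge\left[k_1(m)^h+L_X(\sqrt{d_X}/n_T)^\alpha\right]^{-d_C}$, that is strictly weaker than the lemma. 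The paper avoids this by putting the context term into the radius from the start: it applies Assumption 4 to $\cup\{\mathcal{N}_{h,i}^{P_t}\in\phi_h^{P_t}\}$ with $Ra=\frac{k_1}{\theta}(m)^h+L_X(\sqrt{d_X}/n_T)^\alpha$, then uses $\theta\ge 1$ to get $\frac{k_1}{\theta}(m)^h+L_X(\sqrt{d_X}/n_T)^\alpha\ge\frac{1}{\theta}\left[k_1(m)^h+L_X(\sqrt{d_X}/n_T)^\alpha\right]$, so that $K=K_0\theta^{d_C}$ and the exponent falls on the combined quantity automatically. Notably, the paper uses no localization step at all: Assumption 3 never appears in this proof, so your first stage (and the reverse-Lipschitz difficulty you correctly anticipate for it) can simply be dropped.

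Your remaining worries are real but are not resolved by the paper either: it directly asserts $\left|\phi_h^{P_t}\right|\le\kappa_h^{P_t}\left(\cup\{\mathcal{N}_{h,i}^{P_t}\in\phi_h^{P_t}\},Ra\right)$ at that radius (which can exceed a cell's diameter) and leans on Assumption 4, stated as an exact formula for any union of depth-$h$ regions, as the primitive that encodes both the geometry of the near-optimal set and the cell-fatness issue of counting disjoint cells by balls. So your instinct to read Assumption 4 as the primitive is precisely the paper's move; if you also adopt the paper's radius choice, the only gap that actually blocks the stated exponent closes, and the rest is discharged by citing Assumption 4 exactly as the paper does.
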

\begin{proof}
	From Assumption 2 we can bound the region with $diam(\mathcal{N}_{h,i}^{P_t}) \ge {{k_1}\over\theta}{({m})^h} $.
	As for context deviation we still use the bound with $ {L_X}{({{\sqrt {{d_X}} } \over {{n_T}}})^\alpha }$.
	Since the course number is can be huge such that we cannot know the data exactly, the dimension of course cannot be determined.
	There exists a constant $d'$,
	\begin{IEEEeqnarray*}{l}
		\!\!\!\!\!\begin{array}{l}
			\left|\phi _h^{{P_t}}\right| \le \kappa _h^{{P_t}}\left( \cup \{ \mathcal{N}_{h,i}^{{P_t}} \in \phi _h^{{P_t}}\}, {{{k_1}\over\theta}}{({m})^h} + {L_X}{({{\sqrt {{d_X}} } \over {{n_T}}})^\alpha }\right)
			\\ \; \; \; \;  \quad  \; \le K_0{\left({{{k_1}\over\theta}}{({m})^h} + {L_X}{({{\sqrt {{d_X}} } \over {{n_T}}})^\alpha }\right)^{ - d'}}.
			\IEEEyesnumber \label{eq:L4.1}
		\end{array}
	\end{IEEEeqnarray*}
Obviously, we know that $\theta>1$ which means we can simplify $K_0{\left({{{k_1}\over\theta}}{({m})^h} + {L_X}{({{\sqrt {{d_X}} } \over {{n_T}}})^\alpha }\right)^{ - d'}}$ further,
	\begin{IEEEeqnarray*}{l}
		\!\!\!\!\!\begin{array}{l}
			K_0{\left({{{\!k_1}\over\theta}}{({m})^h} \!+\! {L_X}{({{\sqrt {{d_X}} } \over {{n_T}}})^\alpha\! }\right)^{ - d'}}
			 \!\!\!\!\!\! \le\!\!\! \ K_0{\left({{{\!k_1}\over\theta}}{({m})^h} \!+\! {L_X\over{\theta}}{({{\sqrt {{d_X}} } \over {{n_T}}})^\alpha \!}\right)^{ \!\!- d'}}
			\\
			\quad\quad\quad\quad\quad\quad\quad\quad\quad\quad\quad\quad\!\!  = K_0 {\theta}^{d'}\!\! {\left({{{\!k_1}}}{({m})^h} \!+\! {L_X}{({{\sqrt {{d_X}} } \over {{n_T}}})^\alpha \!}\right)^{ \!\!- d'}}\!\!\!.
		\end{array}
	\end{IEEEeqnarray*}
	Then we take $K = K_0  {\theta}^{d'}$ to get the conclusion.
The $|\bullet|$ represents the number of elements in the set and we take the minimal $d'$ as the dimension of course ${d_C}$.
	\end{proof}


Since we bound the number of suboptimal regions and optimal regions, we can bound the regret with attained conclusion above.
For simplicity, we divide the regret into three parts according to  $\Gamma ^{{P_t}}=\Gamma _1^{{P_t}}\cup\Gamma _1^{{P_t}}\cup\Gamma _1^{{P_t}}$, where $\mathbb{E}[{R_i}({T})]$  is the expected regret of the set $\Gamma_i ^{{P_t}}$ ($i=1,2,3$).
Then, we can get
\begin{IEEEeqnarray*}{l}
	\!\!\!\!\!\begin{array}{l}
		\mathbb{E}[R({T})] = \mathbb{E}[{R_1}({T})] + \mathbb{E}[{R_2}({T})] + \mathbb{E}[{R_3}({T})],
		\IEEEyesnumber \label{eq:R3}
	\end{array}
\end{IEEEeqnarray*}		
where $\Gamma _1^{{P_t}}$ contains the descendants of $\phi _H^{{P_t}}$ ($H$ is a constant depth to be determined later),
$\Gamma _2^{{P_t}}$ contains the regions $\phi _h^{{P_t}}$ the depth from 1 to $H$
and $\Gamma _3^{{P_t}}$ contains descendants of regions in ${(\phi _h^{{P_t}})^c} (0 \le h \le H)$.
Note that top regions in $\Gamma _3^{{P_t}}$ is the child of regions in $\phi _H^{{P_t}}$.

Due to the fact that $T = \sum {T^{P_t}} $, when all the contexts $x_t $ are in the same context sub-hypercube ${P_t}$, the regret is the smallest.
And we consider the situation that time $T$ is distributed uniformly. Under this condition each context sub-hypercube has the least training data, so the sum of deviation towards course is the largest.
In this extreme situation, all the context sub-hypercube has the same times $T^{P_t}$.
After we know the regret in selecting one region, the times when a region has been selected and the number of chosen regions, we can bound the whole regret in Theorem 1.
\newtheorem{myTheorem}{\textbf{Theorem}}
\begin{myTheorem}
	From the lemma above, regret of RHT is
	\begin{IEEEeqnarray*}{l}
		\!\!\!\!\!\begin{array}{l}
			  \; \mathbb{E}[R(T)] \!=\! O \! \left(\!  {{L_X}^{{d_X  \over  {d_X+\alpha(d_C+3)}  }} T^{{{d_X  + \alpha (d_C  + 2)} \over {d_X  + \alpha (d_C  + 3)}}} (\ln T)^{{\alpha  \over {d_X  + \alpha (d_C  + 3)}}} }  \! \right).
		\end{array}
	\end{IEEEeqnarray*}
\end{myTheorem}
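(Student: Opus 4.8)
The plan is to control the regret cell by cell: fix a context sub-hypercube $P_t$, bound the regret $\mathbb{E}[R^{P_t}]$ accumulated during the rounds whose context lies in $P_t$, and then add up over the $(n_T)^{d_X}$ cells. Inside one cell I would fix a cut-off depth $H$ and split the selected nodes according to the decomposition $\Gamma^{P_t}=\Gamma_1^{P_t}\cup\Gamma_2^{P_t}\cup\Gamma_3^{P_t}$ described just before the statement: $\Gamma_2^{P_t}$ holds the near-optimal regions $\phi_h^{P_t}$ with $1\le h\le H$, $\Gamma_1^{P_t}$ holds the descendants of $\phi_H^{P_t}$ (all at depth greater than $H$), and $\Gamma_3^{P_t}$ holds the descendants of the suboptimal regions branching off the optimal chain at depths $0\le h\le H$. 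Write $\rho_h=k_1(m)^h+L_X(\sqrt{d_X}/n_T)^{\alpha}$ for the resolution at depth $h$. Using Assumption 2 (which controls $diam(\mathcal{N}_{h,i}^{P_t})$ by $k_1(m)^h$) and Assumption 3 (which propagates near-optimality from a parent region to its descendants), every selection of a depth-$h$ node inside $\Gamma_1^{P_t}$ or $\Gamma_2^{P_t}$ costs instantaneous regret $O(\rho_h)$; for $\Gamma_3^{P_t}$ I would only invoke the trivial bound $1$, valid because $r_{x_i,c_j}(t)\in[0,1]$.

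Then I would bound the three pieces. For $\Gamma_2^{P_t}$: each node is selected at most once, so $\mathbb{E}[R_2^{P_t}]\le\sum_{h=1}^{H}|\phi_h^{P_t}|\,O(\rho_h)$, and Lemma 3 gives $|\phi_h^{P_t}|\le K\rho_h^{-d_C}$; the resulting series is geometric and sums to $O(\rho_H^{1-d_C})$. For $\Gamma_1^{P_t}$: the number of selections is at most the number of rounds spent in $P_t$, so $\mathbb{E}[R_1^{P_t}]\le O(\rho_H)\,T^{P_t}$. For $\Gamma_3^{P_t}$: a suboptimal region hanging off a region of $\phi_h^{P_t}$ lives at depth $h+1$, there are at most $2|\phi_h^{P_t}|$ of them, and Lemma 2 (applied at depth $h+1$, together with $\rho_{h+1}\ge m\,\rho_h$) bounds the number of times its subtree is entered by $4k_2\ln T/\rho_{h+1}^{2}+M=O(\ln T/\rho_h^{2})$; multiplying by the per-step cost $1$ and by $|\phi_h^{P_t}|\le K\rho_h^{-d_C}$, and summing the resulting geometric series over $0\le h\le H$, gives $\mathbb{E}[R_3^{P_t}]=O(\ln T\,\rho_H^{-(d_C+2)})$, which dominates the $\Gamma_2^{P_t}$ term.

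Next I would choose $H$ at the critical depth where $k_1(m)^H$ is of the same order as the context error $\nu:=L_X(\sqrt{d_X}/n_T)^{\alpha}$, so $\rho_H$ is of order $\nu$; this is precisely the choice that keeps every series above geometric (below this depth $\rho_h$ stops shrinking and saturates at $\nu$). Then $\mathbb{E}[R^{P_t}]=O(T^{P_t}\nu+\ln T\,\nu^{-(d_C+2)})$. Summing over the $(n_T)^{d_X}$ cells, using $\sum_{P_t}T^{P_t}=T$ and the worst case (already isolated in the text) that the $T^{P_t}$ are balanced, gives $\mathbb{E}[R(T)]=O(T\nu+(n_T)^{d_X}\ln T\,\nu^{-(d_C+2)})$. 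Since $\nu=L_X(\sqrt{d_X})^{\alpha}n_T^{-\alpha}$, we have $(n_T)^{d_X}=O(L_X^{d_X/\alpha}(\sqrt{d_X})^{d_X}\nu^{-d_X/\alpha})$, so $\mathbb{E}[R(T)]=O(T\nu+L_X^{d_X/\alpha}\ln T\,\nu^{-(d_C+2+d_X/\alpha)})$; balancing the two terms over $\nu$ (equivalently over the slicing number $n_T$) at $\nu$ of order $(L_X^{d_X/\alpha}\ln T/T)^{\alpha/(d_X+\alpha(d_C+3))}$ yields the stated rate.

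The main obstacle is the combinatorial bookkeeping of the three sums rather than any individual estimate: one must verify that the number of near-optimal regions at each depth collapses the way Lemma 3 promises, keep straight the difference between a region being selected once as a leaf and its whole subtree being traversed (which is what Lemma 2 counts, at the child depth $h+1$), and check that each series is genuinely geometric — which is exactly what forces both the critical-depth choice of $H$ and the observation that $\rho_h$ can never fall below the context error $\nu$. Once those are in place, eliminating $H$ and then optimizing over $\nu$ (i.e. over $n_T$), while confirming that the constraint ``$k_1(m)^H$ is at least of order $\nu$'' is active at the joint optimum, is routine calculus.
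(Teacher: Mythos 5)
Your proposal is correct and follows essentially the same route as the paper: the same $\Gamma_1^{P_t},\Gamma_2^{P_t},\Gamma_3^{P_t}$ decomposition, Lemma 2 to count traversals of suboptimal subtrees, Lemma 3 to count near-optimal regions per depth, and the same balancing of $k_1(m)^H \asymp L_X(\sqrt{d_X}/n_T)^{\alpha}$ followed by optimizing the slicing number $n_T$. The only cosmetic deviation is charging each $\Gamma_3^{P_t}$ round the trivial cost $1$ rather than the parent region's near-optimality gap, which still yields the paper's own final order $O\left((n_T)^{d_X}\ln T\,[k_1(m)^H+L_X(\sqrt{d_X}/n_T)^{\alpha}]^{-(d_C+2)}\right)$ for that term and hence the stated rate.
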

\begin{proof}
	We bound the regret with (\ref{eq:R3}).
	For $\mathbb{E}[{R_1}({T})]$, the regret is generated from the optimal course regions whose courses have been recommended.
	We use the maximum times $T^{P_t}$ to bound the number of optimal regions in $\Gamma_1^{P_t}$.
	Since all the regions in $\Gamma_1^{P_t}$ is optimal,  from Assumption 3 if we take $c_j$ as the worst course in region $\mathcal{N}_{h,i}^{P_t} $ which has the lowest mean reward and $c_k = c^{P_t*}(h,i),$ then we can bound the regret of these nodes as
	\begin{IEEEeqnarray*}{l}
		\!\!\!\!\!\begin{array}{l}
			\mathbb{E}[R_1 (T)] \le \sum\limits_{P_t } {4\left[ {k_1 ({m } )^H  + L_X ({{\sqrt {d_X } } \over {n_T }})^\alpha  } \right]T^{P_t } }
			\\\;\;\;\;\;\;\;\;\;\;\;\;\;\;\; = {4\left[k_1 ({m } )^H  + L_X ({{\sqrt {d_X } } \over {n_T }})^\alpha  \right]T}.
			\IEEEyesnumber \label{eq:T1.2}
		\end{array}
	\end{IEEEeqnarray*}	
	
	As for the second term whose depth is from 1 to $H$, with Lemma 3 and the fact that each regions in $\Gamma_2^{P_t}$ is just played at most once, we can get
	\begin{IEEEeqnarray*}{l}
		\!\!\!\!\!\begin{array}{l}
			 \mathbb{E}[{R_2}({T})]
			 \le \sum\limits_{{P_t}} {\sum\limits_{h = 1}^H {4\left[{k_1}{{({m})}^h} + {L_X}{{({{\sqrt {{d_X}} } \over {{n_T}}})}^\alpha }\right]} \left|\phi _h^{{P_t}}\right|} \\
			 \;\;\;\;\;\;\;\;\;\;\;\;\;\;\; \le {{4K{{({n_T})}^{{d_X}}}} \over {{{\left[{k_1}{{({m})}^H}\right]}^{{d_C}}}}}\sum\limits_{h = 0}^H {4\left[{k_1}{{({m})}^h} + {L_X}{{({{\sqrt {{d_X}} } \over {{n_T}}})}^\alpha }\right]}.
		\end{array}
	\end{IEEEeqnarray*}	
	
	From Lemma 3 we can know the number of optimal regions in depth $h$ are $\left|\phi _h^{{P_t}}\right| \!\le\! K{\left[{k_1}{({m})^h} \!+\! {L_X}{({{\sqrt {{d_X}} } \over {{n_T}}})^\alpha }\right]^{{-d_C}}},$ and the number of the context sub-hypercubes is ${({n_{{T}}})^{{d_X}}}.$
	Thus the last inequation can be derived.
			
When it comes to the last term,
	we notice that the top regions in $\Gamma _3^{{P_t}}$ are the child regions of the regions in $\Gamma _2^{{P_t}}$, since all the regions in $\Gamma _2^{{P_t}}$ is the parent regions of the suboptimal regions.
	And as for the upper bound of course node $k_1(m)^h$, the region of child node is smaller than that of parent node, which means with the depth increasing, the course gap will be smaller than before.
	Hence  we can get that the number of top regions in $\Gamma _3^{{P_t}}$ is less than twice of $\Gamma _2^{{P_t}}$.
	Due to the fact that the child nodes has smaller $diam$ than their parent nodes, we could find that the course deviation of suboptimal region $\mathcal{N}_{h,i}^{P_t}$ can be bounded as $ 4 \left[ {k_1 ({m } )^{h-1}  + L_X ({{\sqrt {d_X } } \over {n_T }})^\alpha  } \right] $.
	And the regret bound is
	\begin{IEEEeqnarray*}{l}
		\!\!\!\!\!\begin{array}{l}
			\; \mathbb{E}[{R_3}({T})]
				 \! \le\! \sum\limits_{{P_t}} \! {\sum\limits_{{{h}} = 1}^H \!{\!4\! \left[{k_1}{{({m})}^{h\!-\!1}} \!\!+\! {L_X}{{({{\sqrt {{d_X}} } \over {{n_T}}})}^\alpha }\right]}  \!\!\sum\limits_{\mathcal{N}_{h,i}^{{P_t}} \in {\Gamma _3^{{P_t}} }} \!\!\!\!\!{\!T_{h,i}^{P_t}(T^{\!P_t\!})} }\\
%
			\,\,\,\,\,\,\quad\quad\quad\; \le \!
			\sum\limits_h \Bigg\{ {{32k_2 K(n_{T} )^{d_{X} } \ln T} \over {\left[ {k_1 (m)^h } \right]^{d_C  + 1} \left[ {k_1 (m)^h  + L_{X} ({{\sqrt {d_X } } \over {n_T }})^{\alpha } } \right]}} \\
		\end{array}
	\end{IEEEeqnarray*}	
\begin{IEEEeqnarray*}{l}
		\!\!\!\!\!\begin{array}{l}
			\quad \quad \quad \quad \quad \quad \quad \quad \quad \quad \; + {{8MK(n_{T} )^{d_{X} } \left[ {k_1 (m)^h  + L_{X} ({{\sqrt {d_X } } \over {n_T }})^{\alpha } } \right]} \over {m\left[ {k_1 (m)^h } \right]^{d_C } }} \Bigg\}. 	
		\end{array}
	\end{IEEEeqnarray*}	
	Note that the bound of $\mathbb{E}[{R_2}({T})]$ is the infinitesimal of higher order of the bound of $\mathbb{E}[{R_3}({T})]$ mathematically, thus we focus more on the first term and the last term since the decisive factors of regret is the first one and last one.
	We notice that with the depth increasing, $\mathbb{E}[{R_1}({T})]$ decreases but $\mathbb{E}[{R_3}({T})]$ increases.
	When we let this two terms to be equal, we can get the regret as follows.
	

	$\mathbb{E}[{R_1}({T})]$ is bounded by
	\begin{IEEEeqnarray*}{l}
		\!\!\!\!\!\begin{array}{l}
			O \left\{ 4\left[{k_1}{({m})^H} + {L_X}{({{\sqrt {{d_X}} } \over {{n_T}}})^\alpha }\right]T\right\} .
			\IEEEyesnumber \label{eq:T1.6}
		\end{array}
	\end{IEEEeqnarray*}

	As for $\mathbb{E}[{R_3}({T})]$, we notice that the constant $M$ is the infinitesimal of higher order of ${{4{k_2}\ln T} \over {{{\left[{k_1}{{({m})}^h} + {L_X}{{({{\sqrt {{d_X}} } \over {{n_T}}})}^\alpha }\right]}^2}}} $, which means we can ignore the influence of the constant $M$.
	Therefore, the bound of  $\mathbb{E}[{R_3}({T})]$ is determined by the first term and it can be shown as
	\begin{IEEEeqnarray*}{l}
		\!\!\!\!\!\begin{array}{l}
			O \left( \sum\limits_h { {{{32k_2 K(n_{T} )^{d_{X} } \ln T} \over {\left[ {k_1 (m)^h } \right]^{d_C  + 1} \left[ {k_1 (m)^h  + L_{X} ({{\sqrt {d_X } } \over {n_T }})^{\alpha } } \right]}}} } \right)
			\\ \quad\quad\quad\quad \quad\quad\quad   \quad\quad\quad\quad\quad\quad  \,\,\,\,\,\, =O \left( {{\ln T{{({n_T})}^{{d_X} }}} \over {{{\left[{k_1}{{({m})}^H}\right]}^{{d_C} + 2}}}}\right) {{ }}.
			\IEEEyesnumber \label{eq:T1.7}
		\end{array}
	\end{IEEEeqnarray*}	
	As for a context sub-hypercube ${P_t}$, all the regions which have been played bring two kinds of regret:
	the regret contributed by context gap ${L_X ({{\sqrt {d_X } } \over {n_T }})^\alpha  }$ and the regret contributed by course region gap ${k_1 ({m } )^H }$.	
	To optimize  the upper bound of regret, we take ${k_1 ({m } )^H  = L_X ({{\sqrt {d_X } } \over {n_T }})^\alpha  }$.
	Under that condition we let  $O(\mathbb{E}[{R_1}({T})]) = O(\mathbb{E}[{R_3}({T})])$ to get
	\begin{IEEEeqnarray*}{l}
		\!\!\!\!\!\begin{array}{l}
			{{ \ln T(n_T )^{d_X } } \over {\left[ {k_1 ({m } )^H } \right]^{d_C  + 2} }} =  {k_1 ({m } )^H} T,
			\IEEEyesnumber \label{eq:T1.8}
		\end{array}
	\end{IEEEeqnarray*}
	where ${n_T} = {\left({{{T}} \over {\ln {T}}}\right)^{{\alpha \over {{d_X} + \alpha({d_C} + 3)}}}}.$
	For the simplicity, we use $\gamma = {{d_X  + \alpha (d_C  + 2)} \over {d_X  + \alpha (d_C  + 3)}}$ and we use the constant $M_2$ to denote the $\mathbb{E}[{R_2}({T})]$ in $\mathbb{E}[{R_3}({T})]$.
	Then we can get the regret as
	\begin{IEEEeqnarray*}{l}
		\!\!\!\!\!\begin{array}{l}
			\; \mathbb{E}[{R}({T})]=8d_X ^{{{\alpha {{[2d}}_X {{ + }}\alpha {{(d}}_C {{ + 3)]}}} \over {{{2[d}}_X {{ + }}\alpha {{(d}}_C {{ + 3)]}}}}} L_X ^{{{d_X } \over {d_X  + \alpha (d_C  + 3)}}} T^\gamma  (\ln T)^{{{1 - }}\gamma } \\
		 \;	+		32k_2 KM_2(d_X )^{{{\alpha (d_C  + 2)(\gamma \! - \! 1)} \over 2}} (L_X )^{(d_C  + 3)\gamma  - (d_C  + 2)}  T^\gamma  (\ln T)^{1\! -\! \gamma } 	
		\IEEEyesnumber \label{eq:R1}
		\end{array}
	\end{IEEEeqnarray*}
	$ \;\;\;\;\;\;\;\;\;\;\;\; =\! O \! \left(\!  {{L_X}^{{d_X  \over  {d_X+\alpha(d_C+3)}  }} T^{{{d_X  + \alpha (d_C  + 2)} \over {d_X  + \alpha (d_C  + 3)}}} (\ln T)^{{\alpha  \over {d_X  + \alpha (d_C  + 3)}}} }  \! \right).$
\end{proof}

\emph{Remark 1:}
From (\ref{eq:R1}) we can make sure $\mathop {\lim }\nolimits_{T \to \infty } {{\mathbb{E}[R(T)]} \over T} = 0$, which means the algorithm can find the optimal courses for the students finally.
Note that the tree exists actually, we store the tree in the cloud and during the recommending process.
Since the dataset is fairly large in the future, using the distributed storage method to solve storage problems is inescapable.


\section{Distributively  Stored Course  Tree}

\subsection{Distributed Algorithm for Multiple Course Storage}
In practice, there are many MOOC platforms e.g. Coursera, edX, Udacity, and the course resources are stored in their respective databases.
Thus course recommendation towards heterogeneous sources in the course cloud needs to be handled by a system that supports distributed-connected storage nodes, where the storage nodes are in the same cloud with different zones.
In this section, we turn to present a new algorithm called Distributed Storage Reformational Hierarchical Trees (DSRHT), which can handle the heterogeneous sources of course datasets and improve the storage condition by mapping them into distributed units in the course cloud.

We denote the distributed storage units whose number is $d$ as $\mathcal{Z} = \{ Z_1, \, Z_2,...Z_d \}$, where $Z_i$ could be a MOOC learning platform.
We bound the number of distributed units  $Z_d$ with $2^{z-1} < d  \le 2^z $ to fit with the binary tree mode,
where $z$ is the depth of the tree and $2^z$ is the number of regions in that depth.
Note that the number of distributed units is determined by the practical situation, thus in every context sub-hypercube $	{P_t}$ the number of elements in set $\mathcal{Z}$ is the same as $d$.
Since $Z_d$ is not always equal to $2^z$, we let the storage units whose regions are empty $ \mathcal{Z}_{\emptyset} = \{ Z_{d+1}, \, Z_{d+2},... Z_{2^z} \} $ be the virtual nodes, which means there is no course in that distributed units $\{ {Z}_{j} = \emptyset | j=d+1, \, d+2,...2^z \,  \}$ for any context sub-hypercube.
Fig. 6 illustrates the condition when there are 3 storage platforms (Coursera, edX and Udacity).
We can get the number of distributed units as $d=3$ and the depth is $z=2 \; (2^1 \le 3 \le 2^2)$, and the set $\mathcal{Z} = \{ Z_1, Z_2, Z_3 \}$ and the set $\mathcal{Z}_\emptyset = \{Z_4\}$.

\begin{figure}
	\centering
	\includegraphics[scale=.55]{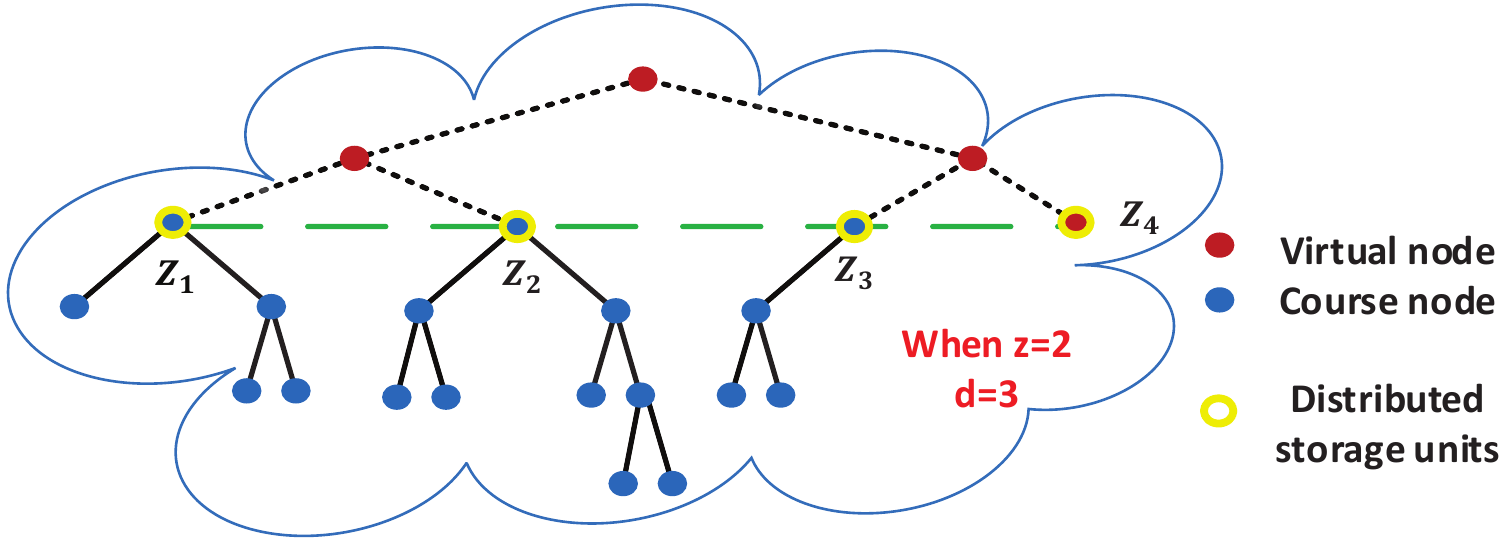}
	\caption{Distributed Storage based on Binary Tree in Cloud}
	\label{fig:digraphc}
\end{figure}

\begin{algorithm}
	\caption{Distributed Course Recommendation Tree}
	\textbf{Require}:
	The constants ${k_1} $ and $m$, the parameter of the storage unit $z$,
	the student's context ${x_t}$ and time $T	$.
	
	\textbf{Auxiliary function}: \textbf{Exploration} and \textbf{$\bm{Bound}$ Updating}
	
	\textbf{Initialization}:
	For all context sub-hypercubes belonging to  ${\mathcal{P}_T}$\\
	${\Gamma ^{{P_t}}} = \{ \mathcal{N}_{z,1}^{{P_t}},\mathcal{N}_{z,2}^{{P_t}}...\mathcal{N}_{z,{2^z}}^{{P_t}}\} $ \\ $E_{z,i}^{{P_t}} = \infty $ ${{for}} \;i  =  1, 2...{{{2}}^z}$
	\begin{algorithmic}[1]
		\FOR {t =1,2,...T}
		\FOR{${d_t} = 0,1,2...{d_X}$}
		\STATE
		Find the context interval in ${d_t}$ dimension
		\ENDFOR
		\STATE
		Get the context sub-hypercube ${P_t}$
		\STATE
		${x_t} \leftarrow $ center point of $ {P_t}$
		\FOR{j=1,2...$2^z-1$}
		\IF{$\mathcal{N}_{z,j}^{{P_t}} < \mathcal{N}_{z,j + 1}^{{P_t}}$}
		\STATE
		$\mathcal{N}_{z,j}^{{P_t}} = \mathcal{N}_{z,j + 1}^{{P_t}}$
		\ENDIF
		\ENDFOR
		\STATE
		$\mathcal{N}_{h,i}^{{P_t}} \leftarrow \mathcal{N}_{z,j}^{{P_t}}$, ${\Omega ^{{P_t}}}  \leftarrow \mathcal{N}_{h,i}^{{P_t}}$
		
		\STATE
		Same to Algorithm 1 from line 8 to line 19
		\ENDFOR
	\end{algorithmic}
\end{algorithm}
In Algorithm 4, we  still find the context sub-hypercube at first (line 2-6).
Then since there are $2^z$ distributed units, we first identify these top regions (line 7-12).
Based on the attained information, the algorithm can start to find the course by utilizing the $Bound$ and $Estimation$ the same as Algorithm 1 (line 13).
For the virtual nodes, we set the $Bound$ value of them as $0$.
As for the tree partition, the difference is that we leave the course regions whose depth is less than  $z$ out to cut down the storage cost.
In the complexity section we will prove that the storage can be bounded sublinearly under the optimal condition.

\subsection{Regret Analyze of DSRHT}



In this subsection we prove the regret result in DSRHT can be bounded sublinearly.
Now, again,  we divide the regions contrast to get the regret upper bound separately by
${\Gamma ^{{P_t}}} = \Gamma _1^{{P_t}} + \Gamma _2^{{P_t}} + \Gamma _3^{{P_t}} + \Gamma _4^{{P_t}},$ where $\mathbb{E}[{R_i}({T})]$  is the expected regret of the set $\Gamma_i ^{{P_t}}$ ($i=1,2,3,4$).
$\Gamma _1^{{P_t}}$ means the regions and their descendants in set $\phi _H^{{P_t}}$ whose depth is $H(H>z)$; $\Gamma _2^{{P_t}}$ is the set whose regions are in set $\phi _h^{{P_t}}\;(z < h \le H)$; $\Gamma _3^{{P_t}}$ contains the regions and their descendants in set ${(\phi _h^{{P_t}})^c}(z < h \le H)$;
 and for $\Gamma _4^{{P_t}}$, they are the regions at depth $z$ which will be selected twice each based on the Algorithm 1.
The depth $H\; (z < H )$ is a constant to be selected later.

\begin{myTheorem}
	The regret of the distributively stored algorithm is
	\begin{IEEEeqnarray*}{l}
		\!\!\!\!\!\begin{array}{l}
			\; \mathbb{E}[R(T)] \!=\! O \! \left(\!  {{L_X}^{{d_X  \over  {d_X+\alpha(d_C+3)}  }} T^{{{d_X  + \alpha (d_C  + 2)} \over {d_X  + \alpha (d_C  + 3)}}} (\ln T)^{{\alpha  \over {d_X  + \alpha (d_C  + 3)}}} }  \! \right),
		\end{array}
	\end{IEEEeqnarray*}
if the number of distributed units satisfies
\begin{IEEEeqnarray*}{l}
	\!\!\!\!\!\begin{array}{l}
		{d} \le 2^z \le {\left({T \over {\ln T}}\right)^{{{{d_X} + \alpha {d_C}{{ }}} \over {{d_X} +\alpha({d_C} + 3) }}}}.
		\IEEEyesnumber \label{eq:A4}
	\end{array}
\end{IEEEeqnarray*}
\end{myTheorem}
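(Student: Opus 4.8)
The plan is to run the proof of Theorem~1 essentially verbatim, the only structurally new ingredient being the extra block $\Gamma_4^{P_t}$ of the $2^z$ regions sitting at depth $z$ (the ``roots'' of the distributed subtrees). First I would check that pruning levels $0,\dots,z-1$ costs nothing in correctness: the depth-$z$ regions still partition $\mathcal{C}$, so for every context sub-hypercube $P_t$ the optimal course $c^{P_t*}$ lies in exactly one depth-$z$ region, and the subtree hanging from it is bit-for-bit the one RHT would have grown there. Hence Lemmas~1--3 and Assumption~3 apply unchanged to $\Gamma_1^{P_t}$ (descendants of $\phi_H^{P_t}$), $\Gamma_2^{P_t}$ (optimal regions at depths $z<h\le H$) and $\Gamma_3^{P_t}$ (descendants of suboptimal regions at depths $z<h\le H$). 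Since the $h$-sum giving $\mathbb{E}[R_3(T)]$ in Theorem~1 is dominated by its deepest ($h=H$) term, truncating it to start at $h=z{+}1$ leaves the order intact, and balancing $\mathbb{E}[R_1(T)]$ against $\mathbb{E}[R_3(T)]$ exactly as there gives $\mathbb{E}[R_1(T)]+\mathbb{E}[R_2(T)]+\mathbb{E}[R_3(T)] = O\!\big(L_X^{\,d_X/(d_X+\alpha(d_C+3))}\,T^{\gamma}(\ln T)^{1-\gamma}\big)$ with $\gamma=\tfrac{d_X+\alpha(d_C+2)}{d_X+\alpha(d_C+3)}$ and the same $n_T=(T/\ln T)^{\alpha/(d_X+\alpha(d_C+3))}$, provided one chooses $H$ as in Theorem~1 (so $H>z$; this is the first place the hypothesis on $2^z$ is used).

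The substance is the bound on $\mathbb{E}[R_4(T)]$. Each of the $2^z$ depth-$z$ regions starts with $Estimation$ value $\infty$, so the algorithm is forced to recommend a course from each of them; once a region has been played its two children are created and it is never a leaf again, so — counting the single extra visit triggered while both freshly-created children still carry $E=\infty$ — each depth-$z$ region is played only a constant number of times. For the depth-$z$ regions lying in $\phi_z^{P_t}$, Assumption~3 bounds the per-play regret by $4[k_1 m^z+L_X(\sqrt{d_X}/n_T)^\alpha]$ and Lemma~3 bounds their number by $K[k_1 m^z+L_X(\sqrt{d_X}/n_T)^\alpha]^{-d_C}$; for the genuinely suboptimal depth-$z$ regions a Lemma~2-type argument bounds the plays per region by $O(\ln T/\Delta^2)$ in terms of the region's suboptimality gap $\Delta$, and summing $\Delta\cdot O(\ln T/\Delta^2)$ over regions grouped by $\Delta$ (again via the packing bound of Assumption~4, capped by the total count $2^z$) gives a bound polynomial in $2^z$ and $n_T$ up to a $\ln T$ factor. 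Collecting these and summing over the $(n_T)^{d_X}$ context sub-hypercubes yields $\mathbb{E}[R_4(T)] = O\big(\mathrm{poly}(2^z, n_T)\cdot \ln T\big)$; substituting $n_T=(T/\ln T)^{\alpha/(d_X+\alpha(d_C+3))}$ together with the hypothesis $2^z\le(T/\ln T)^{(d_X+\alpha d_C)/(d_X+\alpha(d_C+3))}$ shows this is $O\big(T^{\gamma}(\ln T)^{1-\gamma}\big)$, i.e. of no larger order than the other three blocks. The units in $\mathcal{Z}_\emptyset$ contribute nothing since their $Bound$ is fixed at $0$ and they are never selected. Plugging everything into the four-way analogue of (\ref{eq:R3}), $\mathbb{E}[R(T)]=\sum_{i=1}^4\mathbb{E}[R_i(T)]$, gives the claimed bound.

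The step I expect to be the main obstacle is the accounting of $\mathbb{E}[R_4(T)]$ and the calibration of the threshold (\ref{eq:A4}). Unlike the regions treated in Theorem~1, the depth-$z$ regions have no optimal parent, so their per-play regret cannot be absorbed into a $k_1 m^{h-1}$ factor; one must argue the constant play-count directly from the $Estimation$/$Bound$ dynamics of Algorithms~2--3 for the $\phi_z^{P_t}$ regions and lean on Lemma~2 plus the packing count of Assumption~4 for the rest, being careful that the capped sum over suboptimality gaps genuinely collapses to the polynomial order claimed. Verifying that $(T/\ln T)^{(d_X+\alpha d_C)/(d_X+\alpha(d_C+3))}$ is exactly the exponent that keeps $(n_T)^{d_X}$ times this $\Gamma_4^{P_t}$ contribution inside $T^{\gamma}(\ln T)^{1-\gamma}$, and that it is simultaneously compatible with the choice $H>z$, is the remaining bookkeeping.
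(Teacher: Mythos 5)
Your overall route is the same as the paper's: the identical four-block decomposition, the first three blocks bounded exactly as in Theorem 1 with the depth sums started at $z$, the same balancing $k_1(m)^H=L_X({\sqrt{d_X}\over n_T})^{\alpha}$, the same choice $n_T=({T\over\ln T})^{\alpha\over d_X+\alpha(d_C+3)}$, and condition (\ref{eq:A4}) invoked only to keep the depth-$z$ layer affordable. The paper's own treatment of $\Gamma_4^{P_t}$ is the crude version of what you sketch: it simply applies the Lemma 2 play-count bound at depth $h=z$ to each of the $2^z-1$ depth-$z$ regions, i.e. $\mathbb{E}[R_4(T)]\le(2^z-1)\{4k_2\ln T/[k_1(m)^z+L_X({\sqrt{d_X}\over n_T})^{\alpha}]^2+M\}$ as in (\ref{eq:T2.5}), then substitutes $n_T$ and the cap on $2^z$; it neither refines by suboptimality gaps nor, crucially, multiplies by the number $(n_T)^{d_X}$ of context sub-hypercubes.

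That last point is where your bookkeeping, as written, does not deliver the stated theorem. If you charge a Lemma-2-type count of order $\ln T\cdot n_T^{2\alpha}$ (or your refined $\Delta\cdot O(\ln T/\Delta^2)=O(\ln T\cdot n_T^{\alpha})$) to each suboptimal depth-$z$ region in each sub-hypercube and then sum over the $(n_T)^{d_X}$ sub-hypercubes, the exponent of $T/\ln T$ becomes at best ${d_X+\alpha d_C+\alpha d_X+\alpha\over d_X+\alpha(d_C+3)}$, which exceeds $\gamma={d_X+\alpha(d_C+2)\over d_X+\alpha(d_C+3)}$ whenever $d_X>1$: the threshold (\ref{eq:A4}) is calibrated for the bound \emph{without} the per-sub-hypercube factor. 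To land the claimed order you must either do as the paper does and charge the depth-$z$ cost once (with per-play regret implicitly bounded by $1$), or strengthen the hypothesis on $2^z$, or add an argument that the depth-$z$ exploration cannot saturate in every sub-hypercube simultaneously (e.g. it is also capped by the arrival counts $T^{P_t}$, whose sum is $T$); none of these is spelled out in your plan, and this is exactly the step you flagged as the obstacle. A smaller internal tension: your ``constant number of plays per depth-$z$ region'' claim only covers direct selections of those leaves, whereas the quantity Lemma 2 controls is $T_{z,i}^{P_t}$, the passes through the node counting all selected descendants; it is the latter that $\Gamma_4$ must absorb, because the subtrees under suboptimal depth-$z$ regions are not reachable through the ``children of $\Gamma_2$'' counting used for $\Gamma_3$, so they cannot be delegated to that block as your first paragraph suggests.
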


\begin{proof} (Sketch) Detailed proof is given in Appendix B.
For the first third term, the regret upper bound is the less than the result in Theorem 1, since the regret of node $N_{h,i}^{P_t}$ will be larger as far as the increasing depth $h$.

	When it comes to the fourth term, we notice that since the depth of $z$ is bounded, and the worst situation happens when the number of distributed units is the maximum ($2^z$).
	
	\begin{IEEEeqnarray*}{l}
		\!\!\!\!\!\begin{array}{l}
			\mathbb{E}[{R_4}({T})] \le ({2^z} - 1)\left\{ {{4{k_2}{{\ln T}}} \over {{{\left[{k_1}{{({m})}^z} + {L_X}{{({{\sqrt {{d_X}} } \over {{n_T}}})}^\alpha }\right]}^2}}} + M\right\}\\
			\;\;\;\;\;\;\;\;\;\;\;\,\quad \le{\left({T \over {\ln T}}\right)^{\!\!{{{d_X} + \alpha {d_C}{{ }}} \over {{d_X} \!+\! \alpha({d_C} \!+\! 3) }}}}  \!\!\!  \left\{ \!\! {{4{k_2}{{\ln T}}} \over {{{\left[{k_1}{{({m})}^z} + {L_X}{{({{\sqrt {{d_X}} } \over {{n_T}}})}^\alpha }\right]}^2}}} \!+\!  M \!\! \right\}.
			\IEEEyesnumber \label{eq:T2.5}
		\end{array}
	\end{IEEEeqnarray*}
	For the value of $n_T$ determined by the first third term ${n_T} = {\left({{{T}} \over {\ln {T}}}\right)^{{\alpha \over {{d_X} + \alpha({d_C} + 3)}}}}.$
	we have
	\begin{IEEEeqnarray*}{l}
		\!\!\!\!\!\begin{array}{l}
			{{  }}\mathbb{E}[{R_4}(T)]{{  }} = O {\left({\left({T \over {\ln T}}\right)^{{{{d_X} + \alpha{d_C}{{ }}} \over {{d_X} + \alpha({d_C} + 3)}}}}{\mathop{ \ln T}\nolimits} {({n_T}{{)}}^2}\right)}
			\\     \;\;\;\;\;\;\;\;\;\;\;\;\;\;\; = O \left({T^{{{{d_X} + \alpha({{d_C} +2}}) \over {{d_X} + \alpha{({d_C} + 3)}}}}}{(\ln T)^{{\alpha \over {{d_X} + \alpha({d_C} + 3)}}}}\right).
			\IEEEyesnumber \label{eq:T2.6}
		\end{array}
	\end{IEEEeqnarray*}
	
	From Theorem 1, we minimize the regret by making context gap and course region gap equal too,
	i.e., ${{k_1}{{({m})}^H} = {L_X}{{({{\sqrt {{d_X}} } \over {{n_T}}})}^\alpha }}.$
	For the simplicity we take the constant ${k_2} = 2$, and the \emph{slicing number} can be derived  by setting $O(\mathbb{E}[{R_1}(T)]) = O(\mathbb{E}[{R_3}(T)])$ as  ${n_T} = {\left({{{T}} \over {\ln {T}}}\right)^{{\alpha \over {{d_X} + \alpha({d_C} + 3)}}}}.$
\end{proof}

\emph{Remark 2:}
Note that if there is only one distributed unit ($z=0$), the regret $\mathbb{E}[{R_4}(T)] = 0$, thus we can get the conclusion of Theorem 1.
Compared  to the RHT algorithm, we notice that the regret upper bound is the same.
Since this algorithm starts at the depth of $z$, it need to explore all the nodes in depth $z$ first.
Thus it performs not as well as RHT in the beginning.
However, the algorithm can fit the practical problem better since there are many MOOC platforms in practice.

\section{Storage Complexity}

\begin{table*}
	\caption{Theoretical Comparison}
	\begin{center}
		\scalebox{1.1} {
			\begin{tabular}{c|c|c|c|c|c}
				\toprule
				Algorithm & Context & Big data-oriented & Time Complexity & Space Complexity  & Regret \\
				\midrule
				ACR\cite{ACR} & Yes & No & $O \left(T^2  + K_E T\right)	$ & $O \left(\sum\nolimits_{l = 0}^E {K_l  + T}\right) $ & $O \left(T^{{{d_I  + d_C  + 1} \over {d_I  + d_C  + 2}}} \ln T\right)	$ \\
				HCT\cite{HCT} & No & Yes & $O (T\ln T)$ & $O \left(T^{{d \over {d+ 2}}} (\ln T)^{{2 \over {d  + 2}}}\right)	$ & $O \left(T^{{{d + 1} \over {d  + 2}}} (\ln T)^{{1 \over {d  + 2}}} \right)	$ \\
				RHT & Yes & Yes & $O (T\ln T)$	 & $O (T)$ & $O \left(T^{{{d_X  + d_C  + 2} \over {d_X  + d_C  + 3}}} (\ln T)^{{1 \over {d_X  + d_C  + 3}}}\right) $
				\\
				DSRHT & Yes & Yes & $O (T\ln T)$	 & $O \left(T - 2^z \right)$ & $O \left(T^{{{d_X  + d_C  + 2} \over {d_X  + d_C  + 3}}} (\ln T)^{{1 \over {d_X  + d_C  + 3}}}\right) $	\\ 	
				\bottomrule
			\end{tabular} }
		\end{center}
	\end{table*}


The storage problem has been existing in big data analytics for a long time, so how to use the distributed storage scheme  to handle the problem matters a lot.
In this section, we analyze the two algorithms' space complexity mathematically.
We use ${{S}}(T)$ to represent the storage space complexity.
For RHT algorithm, since it explores one region in one round, it's obvious to know the space complexity is linear 	$ \mathbb{E}[S(T)]=O(T) $.
\begin{myTheorem}
	In the optimal condition, we take the number of storage units satisfied ${2^z} = {\left({T \over {\ln T}}\right)^{{{{d_X} + \alpha{d_C}{{ }}} \over {{d_X} + \alpha({d_C} + 3)}}}}$, then we can get the space complexity
	\begin{IEEEeqnarray*}{l}
		\!\!\!\!\!\begin{array}{l}
			\mathbb{E}[S(T)]
			 \!=\! O \!\left(\!T^{{{d_X  + \alpha d_C  } \over {d_X  + \alpha (d_C  + 3)}}} \!\left(\!T^{{3\alpha \over {d_X  + \alpha(d_C  + 3)}}}  \!-\! (\ln T)^{{{d_X  + \alpha(d_C  + 3)} \over {d_X  + \alpha d_C  }}} \!\right)\!\right)\!.
		\end{array}
	\end{IEEEeqnarray*}
\end{myTheorem}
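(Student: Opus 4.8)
The plan is to count the course regions that DSRHT actually keeps in the distributed units, reusing the region bookkeeping already carried out for the regret of Theorem~2, and then to show that truncating the top $z$ levels of every tree is exactly what turns the $O(T)$ storage of RHT into the stated bound. First I would write $\mathbb{E}[S(T)] = \sum_{P_t}(\text{number of stored regions in the tree of }P_t)$ and, inside each context sub-hypercube, split the stored regions exactly as in the regret proof of Theorem~2: the descendants of $\phi_H^{P_t}$, the optimal regions $\phi_h^{P_t}$ with $z<h\le H$, the descendants of the suboptimal regions $(\phi_h^{P_t})^c$ with $z<h\le H$, and the $2^z$ depth-$z$ regions that serve as the forest's roots (the $2^z-d$ empty ones being virtual). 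Because Algorithm~4 creates a region only when exploration descends to it, and each region is created at most once, the first three groups are controlled by the very same quantities that appear in Theorem~2: applying Lemma~3 to bound $|\phi_h^{P_t}|$ at each depth, Lemma~2 to bound the plays of the suboptimal regions, and plugging in $n_T = (T/\ln T)^{\alpha/(d_X+\alpha(d_C+3))}$ together with the depth $H$ fixed by $k_1 (m)^H = L_X(\sqrt{d_X}/n_T)^\alpha$, their contribution is $O(T)$, i.e.\ the same order as RHT.

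The difference from RHT is the fourth ingredient. A full hierarchy would also store the $2^z-1$ internal regions at depths $0,\dots,z-1$ of each sub-hypercube, whereas Algorithm~4 never allocates them and only maintains the $2^z$ depth-$z$ regions identified with the storage units $\mathcal{Z}$. So, measured against the $O(T)$ regions of a full hierarchy, DSRHT omits a block of regions whose size is governed by $2^z$ — this is the $O(T-2^z)$ estimate recorded for DSRHT. I would then substitute the optimal choice $2^z = (T/\ln T)^{(d_X+\alpha d_C)/(d_X+\alpha(d_C+3))}$, factor $T^{(d_X+\alpha d_C)/(d_X+\alpha(d_C+3))}$ out of $\mathbb{E}[S(T)]$, and exploit the identity $\frac{d_X+\alpha d_C}{d_X+\alpha(d_C+3)}+\frac{3\alpha}{d_X+\alpha(d_C+3)} = 1$ (so that $\frac{d_X+\alpha(d_C+3)}{d_X+\alpha d_C}$ is the reciprocal exponent) to rewrite the leading term as $T$ and collect the omitted block and the logarithmic corrections into the factor $T^{3\alpha/(d_X+\alpha(d_C+3))}-(\ln T)^{(d_X+\alpha(d_C+3))/(d_X+\alpha d_C)}$, which is exactly the claimed expression.

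The step I expect to be the main obstacle is the bookkeeping in this decomposition. One must argue, on the one hand, that the set of stored regions coincides with the set of regions created by the exploration routine, so that the $O(T)$ count of RHT transfers unchanged to the three deep groups; and, on the other hand, that the worst case for storage is realized by the same uniform spread of the $T$ rounds over the $(n_T)^{d_X}$ sub-hypercubes used in Theorems~1 and~2 — otherwise the per-sub-hypercube depth $H$, and hence the number of regions per tree, need not be the one those theorems fixed. A related subtlety is to justify that the $2^z$ depth-$z$ roots (including the virtual, empty ones) are genuine $O(1)$-cost placeholders rather than full course sets, so that subtracting the omitted upper block is a true net saving rather than merely a relabeling; once these points are settled, only the routine substitution and algebra described above remain.
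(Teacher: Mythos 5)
Your overall target is right ($\mathbb{E}[S(T)]=O(T-2^{z})$ followed by substituting $2^{z}=(T/\ln T)^{(d_X+\alpha d_C)/(d_X+\alpha(d_C+3))}$), but the step that produces the $-2^{z}$ saving is exactly the step you leave open, and your route to it is not the one that works. You frame the saving as a static comparison: ``a full hierarchy would also store the $2^{z}-1$ internal regions at depths $0,\dots,z-1$, which Algorithm~4 never allocates.'' That subtraction is not justified by itself: the $O(T)$ storage of RHT is not the size of a full hierarchy but the consequence of ``one new region created per round,'' so if DSRHT also created one new region in every round its storage would be $2^{z}+T$, i.e.\ larger than RHT's, and omitting the top block would indeed be the mere relabeling you worry about in your last paragraph. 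The paper's proof closes this with a two-phase temporal count that your proposal never supplies: during the first roughly $2^{z+1}$ rounds the algorithm only re-selects the $2^{z}$ initial depth-$z$ regions (each is selected about twice, which is precisely the $\Gamma_4^{P_t}$ accounting in Theorem~2), so the storage in that phase is at most $2^{z}$; from round $2^{z+1}$ on, each round creates exactly one unplayed region, contributing at most $T-2^{z+1}$. Summing gives $S(T)\le 2^{z}+(T-2^{z+1})=T-2^{z}$, after which the choice $z=\bigl\lfloor \tfrac{d_X+\alpha d_C}{d_X+\alpha(d_C+3)}\ln(T/\ln T)/\ln 2 \bigr\rfloor$ (the largest depth allowed by the constraint of Theorem~2) yields the stated bound.

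Two further remarks. First, the regret machinery you import (Lemma~2, Lemma~3, the depth $H$, the worst-case uniform spread of $T$ over the $(n_T)^{d_X}$ sub-hypercubes) is unnecessary and somewhat off-target for a storage count: Lemma~2 bounds expected selection counts of subtrees, not the number of stored regions, and the trivial observation that at most one region is created per round already gives the $O(T)$ part uniformly over how contexts are distributed, so the ``uniform spread'' obstacle you anticipate is a non-issue. Second, your claimed ``exact'' algebraic match with the theorem's displayed expression does not hold: with $\beta=\tfrac{d_X+\alpha d_C}{d_X+\alpha(d_C+3)}$ one has $T-2^{z}=T-T^{\beta}(\ln T)^{-\beta}$, whereas the theorem's formula reads $T-T^{\beta}(\ln T)^{1/\beta}$; this discrepancy is present in the paper itself (its proof ends with $O\bigl(T-(T/\ln T)^{\beta'}\bigr)$ for yet another exponent), so it is not a defect of your plan, but you should not assert exact agreement.
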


\begin{proof}
	Every round $t$ has to explore a new leaf region.
	To get the optimal result, we suppose the depth is as deepest as we can choose $  z = \left\lfloor {{{{{d_X  + \alpha d_C } \over {d_X  + \alpha(d_C  + 3)}}\ln \left({T \over {\ln T}}\right)} \over {\ln 2}}} \right\rfloor .$
	Under the condition that $t< 2^{z+1}$, we have
	$S_1 (T) \le 2^z  = \left({T \over {\ln T}}\right)^{{{d_X  + \alpha d_C  } \over {d_X  + \alpha (d_C  + 2) }}} ,$
	when the time $	t \ge 2^{z+1} $, after one round there is one unplayed region being selected,
	so the second part is $S_2 (T) \le T - 2^{z + 1}  = T - 2\left({T \over {\ln T}}\right)^{{{d_X  + \alpha d_C  } \over {d_X  + \alpha (d_C  + 2)}}} .	$
	Thus we can get the storage complexity
	\begin{IEEEeqnarray*}{l}
		\!\!\!\!\!\begin{array}{l}
			\mathbb{E}[S(T)] = O \left(T - \left({T \over {\ln T}}\right)^{{{d_X  + \alpha d_C  } \over {d_X  + \alpha (d_C  + 2)}}} \right).
			\IEEEyesnumber \label{eq:T3.1}
		\end{array}
	\end{IEEEeqnarray*}
\end{proof}

\emph{Remark 3:}
Since the value of $z$ is changeable, appropriate value can make the space complexity sublinear.
From ($\ref{eq:T3.1}$), if the data dimension is fairly large, the space complexity will be relative small.
However, the large database and tremendous distributed units will make the algorithm learning too slow.
Thus taking an appropriate parameter is crucial.

Besides, we compare our algorithms with some similar works which all use the tree partition.
%
%
In table \Rmnum{1} we categorize these algorithms based on the following characteristics: context-awareness, big data-oriented, time complexity, space complexity and regret.
As for the context-awareness and big data-oriented, our two algorithms both take them into consideration, and ACR\cite{ACR} and HCT\cite{HCT} only take one respect each.
For the time complexity,  we can find that the ACR\cite{ACR} is polynomial in $T$ with $O \left(T^2  + K_E T\right)$ but others are linear with time $O \left(T\ln T \right) $.
When it comes to space complexity, our algorithm RHT and algorithm ACR\cite{ACR} can bound it linearly, and the HCT\cite{HCT} reduces it to sublinear. For our DSRHT, we can also realize the sublinear space complexity under the optimal condition.
The four algorithms all realize the sublinear regret,
and our two algorithms can bound the regret with $O \left(T^{{{d_X  + d_C  + 2} \over {d_X  + d_C  + 3}}} (\ln T)^{{1 \over {d_X  + d_C  + 3}}}\right)  $ by setting $\alpha = 1$ to make sure fair comparison with ACR\cite{ACR} and HCT\cite{HCT}.
To sum up, our algorithms not only consider the context-awareness but also  are big data-oriented. Besides, their time complexity and space complexity are promising.


\section{Numerical Results}

In this section,
we present:
(1) the source of data-set;
(2) the sum of regret are sublinear and the average regret converges to $0$ finally;
(3) we compare the regret bounds of our algorithms with other similar works;
(4) distributed storage method can reduce the space complexity.
Fig. 5 illustrates the MOOC operation pattern in edX\cite{edX}.
The right side is the teaching window and learning resources, and the left includes lessons content, homepage, forums and other function options.
\subsection{Description of the Database}
We take the database which contains feedback information and course details from the edX\cite{edX} and the intermediary website of MOOC\cite{MOOC}.
In those platforms, the context dimensions contain nationality, gender, age and the highest education level, therefore we take $d_X = 4$.
As for the course dimensions, they comprise starting time, language, professional level, provided school, course and program proportion, whether it's self-paced, subordinative subject etc.
Thus we take the course dimension as $10$.
For the feedback system, we can acquire reward information from review plates and forums.
Thoroughly, the reward is produced from two aspects, which are the marking system and the comments from forums.

For the users, when a novel field comes into vogue, tremendous people will get access to this field in seconds.
The data we get include $2\times10^5$ students using MOOC in those platforms,
and the average number of courses the students comment is around 30.
As for our algorithm, it focuses on the group of students in the same context sub-hypercube rather than individuals.
Thus,  when in the next time users come  with context information and historical records, we just treat them as the new training data without distinguishing them.
However the number of users is limited, even if generating a course is time-costing, the number of courses is unlimited and education runs through the development of human being.
Our algorithm pays more attention to the future highly inflated MOOC curriculum resources, and existing data bank is not tremendous enough to demonstrate the superiority of our algorithm since MOOC is a new field in education.

We find 11352 courses from those platforms including plenty of finished courses.
The number of courses doubles every year.
Based on the trend, the quantity will be more than forty thousand times within 20 years.
To give consideration to both accuracy and scale of sources of data, we copy the original sources to forty five thousand times to satisfy the number requirements.
Thus we extend the 11352 course data  to around $5 \times 10^8$ to simulate future explosive data size of courses in 2030.

\subsection{Experimental Setup}


 \begin{figure}
 	\centering
 	\includegraphics[scale=.145]{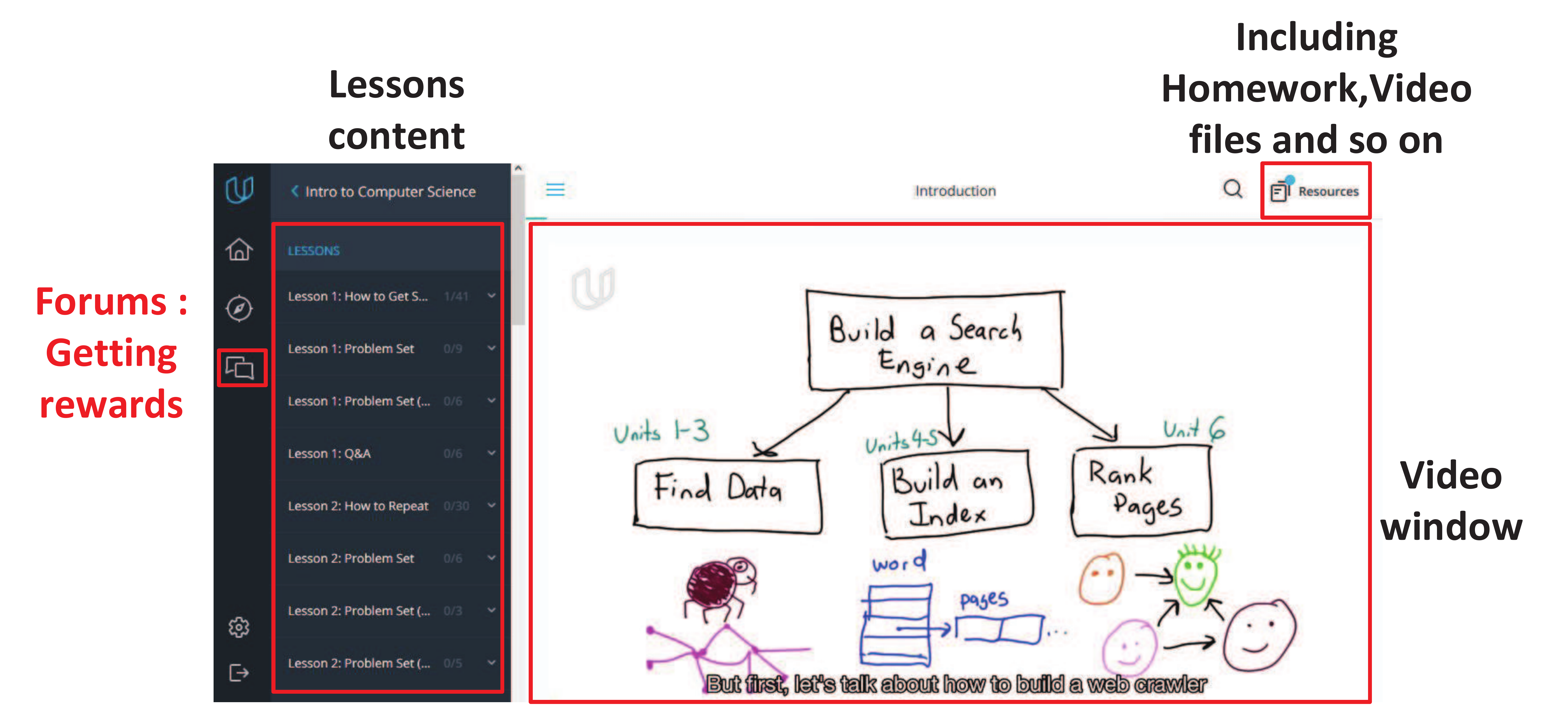}
 	\caption{MOOC Learning Model}
 	\label{fig:digraph0}
 \end{figure}

As for our algorithm, the final training number of data is over $6\times 10^6$ and the number of courses is about $5 \times 10^8$.
Note that we focus more on the comparison rather than showing the superiorities of our algorithms, thus we take the statistic course data to better illustrate the comparing effect.
The works are introduced as follows.


\begin{itemize}
	\item Adaptive Clustering Recommendation Algorithm (ACR)\cite{ACR}: The algorithm injects contextual factors capable of adapting to more students, however, when the course database is fairly large, ergodic process in this model cannot handle the dataset well.
	\item High Confidence Tree algorithm (HCT)\cite{HCT}: The algorithm supports unlimited dataset however large it is, but there is only one student for the recommendation model since it does not take context into consideration.
	\item We consider both the scale of courses and users' context, thus our model can better suit future MOOC situation. In DSRHT we sacrifice some immediate interests to get better long-term performance.
\end{itemize}
To verify the conclusions practically, we divide the experiment into following three steps:

\subsubsection{Step 1.}
In this step we compare our RHT algorithm with the two previous works which are ACR\cite{ACR} and HCT\cite{HCT} with different size of training data.
We input over $6\times 10^6$ training data including context information and feedback records in the reward space mentioned in the section of database description into the three models, and then the models will start to recommend the courses stored in the cloud.
In consideration of HCT not supporting context, we normalize all the context information to the same (center point of unit context hypercube).
Since the reward distribution is stochastic, we simulate 10 times to get the average values where the interfere of random factor is restrained.
Then the two regret tendency diagrams are plotted to evaluate algorithms performances.


\subsubsection{Step 2.}
We use the DSRHT algorithm to simulate the results.
The RHT algorithm can be seemed as degraded DSRHT with $z = 0$, and we compare the DSRHT algorithm with different parameters $z$.
Without loss of generality, we take $z = 0,$ $ z = 10 $ and $z = \left\lfloor {{{{{d_X  + \alpha d_C } \over {d_X  + \alpha(d_C  + 3)}}\ln \left({T \over {\ln T}}\right)} \over {\ln 2}}} \right\rfloor \approx 20$.
Then we plot the regret and $z$ diagram to analyze the constant optimal parameter.


\begin{figure}
	\centering
	\includegraphics[scale=.568]{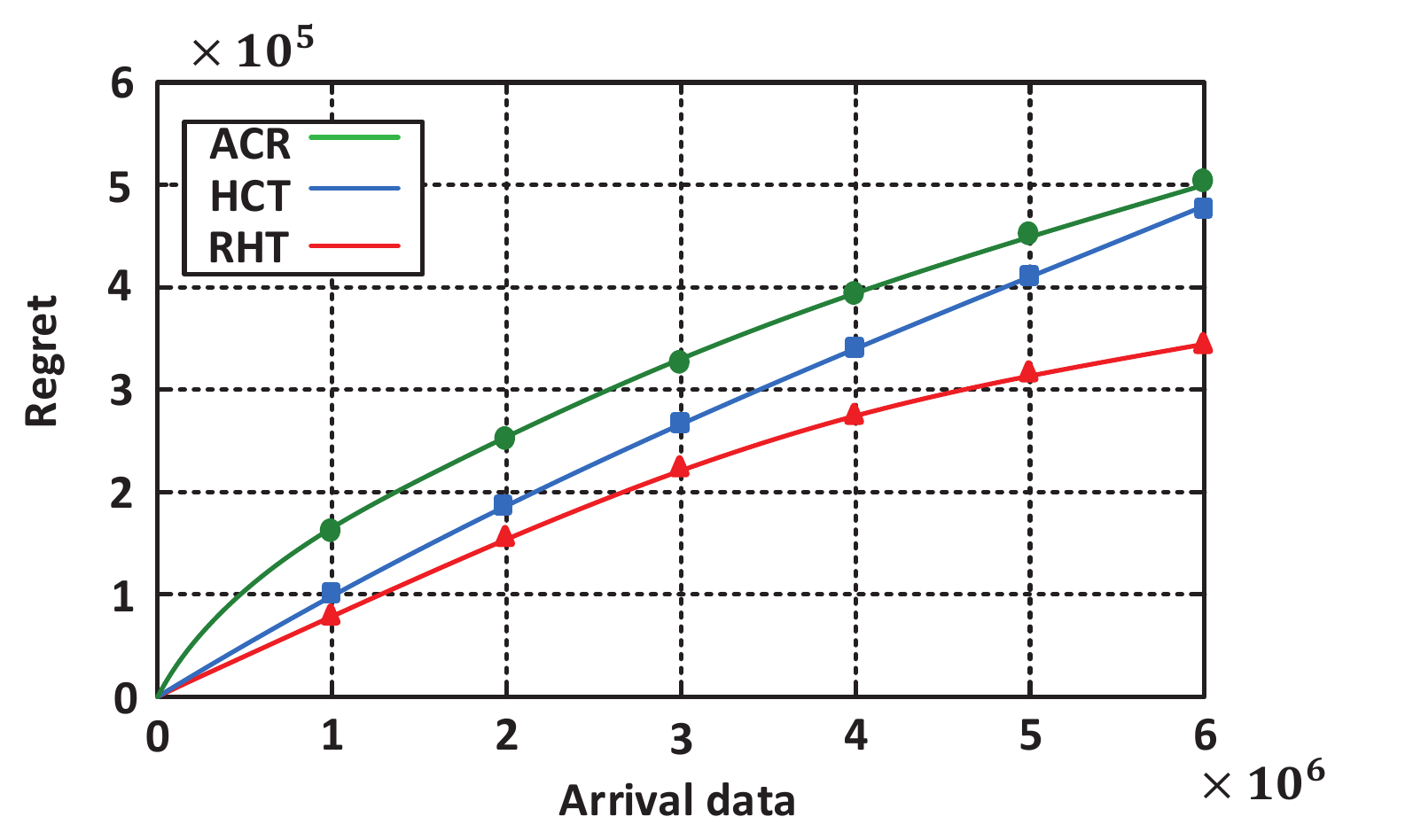}
	\caption{Comparison of Regret (RHT)}
	\label{fig:digraph5}
\end{figure}

\begin{figure}
	\centering
	\includegraphics[scale=.568]{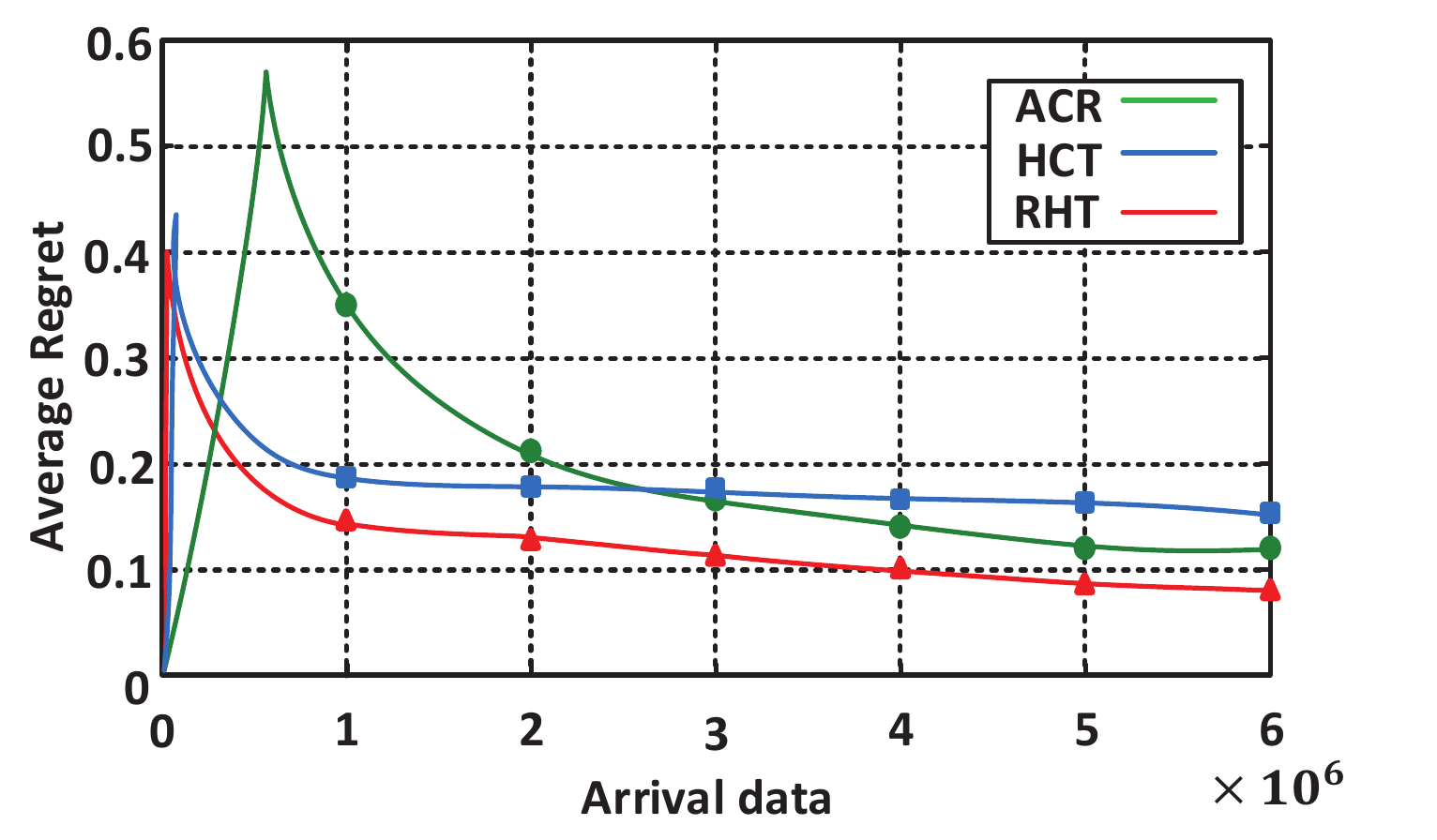}
	\caption{Comparison of Average Regret (RHT)}
	\label{fig:digraph6}
\end{figure}


\begin{table}
	\caption{Average Accuracies of RHT}	
	\begin{center}
		\begin{tabular}{c|cccc|}
			\toprule
			\diagbox{Number $\times 10^6 $}{Algorithm} & ACR\cite{ACR} & HCT\cite{HCT} & RHT  \\
			\midrule
			1 & 65.43\% & 81.02\% &  85.34\%  \\
			2 & 78.62\% & 82.13\% &  87.62\%  \\
			3 & 83.23\% & 82.76\% &  89.92\%  \\
			4 & 86.28\% & 83.01\% &  90.45\%  \\
			5 & 88.19\% & 83.22\% &  91.09\%  \\
			6 & 88.79\% & 83.98\% &  91.87\%  \\
			\bottomrule
		\end{tabular}
	\end{center}
\end{table}

\subsubsection{Step 3.}
We record the storage data to analyze the space complexity of those four algorithms.
First we upload 517.68 TB indexing information of courses to our university high performance
 computing platform, whose GPU reaches to 18.46 TFlops and SSD cache is 1.25 TB. Then, we implement and perform the four algorithms successively.
In the process of training, we record the regret for six times.
And in the end of training, we record the space usage of the tree which represent the training cost.
As for the DSRHT, we use the virtual partitions in school servers to simulate the distributively stored course data.
Specifically, we reupload the course data to the school servers in 1024 virtual partitions, and then perform the DSRHT algorithm.


\subsection{Results and Analysis}

We analyze our algorithm from two different angles: Comparing with other two works and comparing with itself with different parameter $z$.
In each direction, we compare the regret first, and analyze the average regret.
And then we discuss the accuracies based on the average regret.
At last we will compare the storage conditions from different algorithms.

\begin{figure}
	\centering
	\includegraphics[scale=.568]{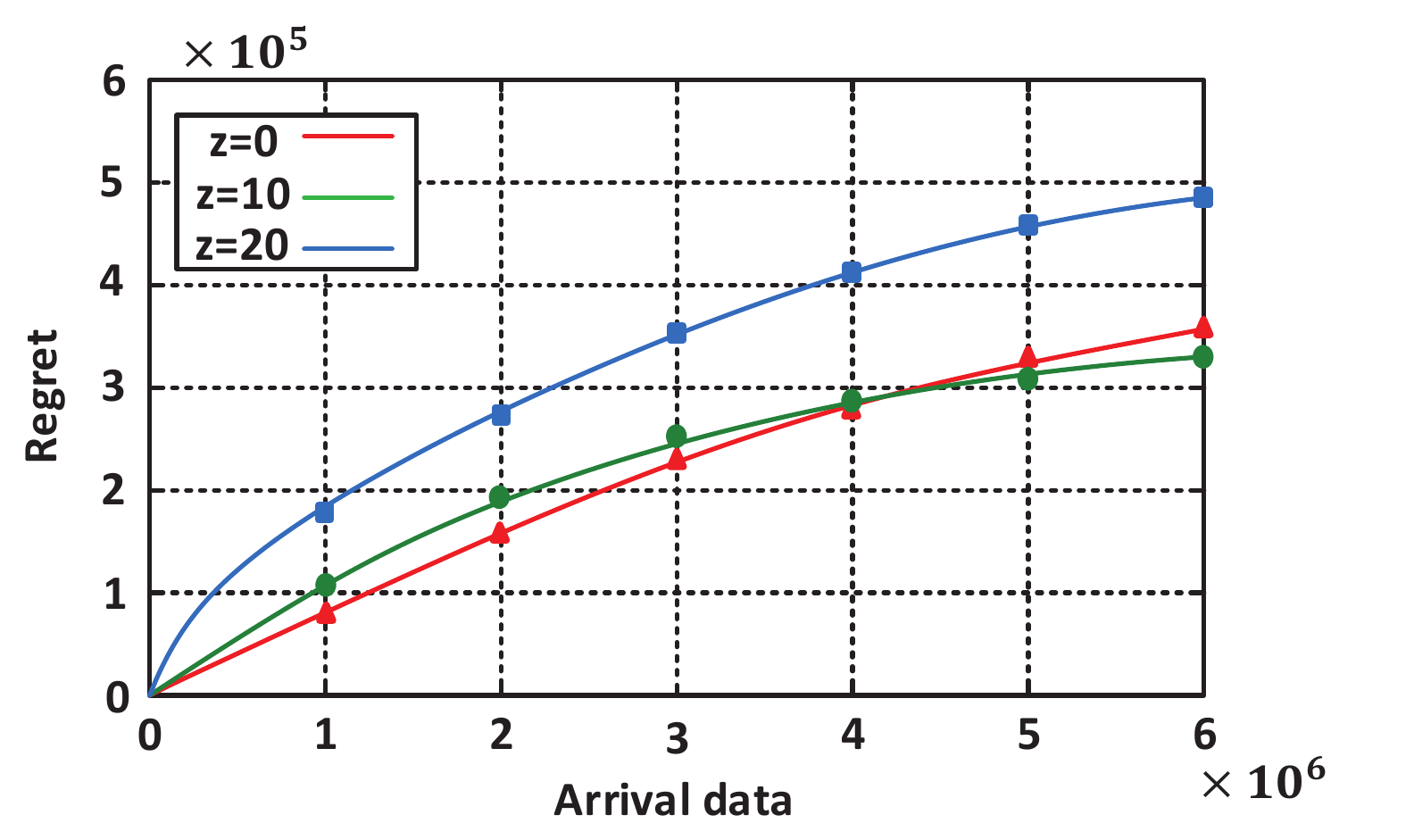}
	\caption{Comparison of Regret with Different  $z$}
	\label{fig:digraph8}
\end{figure}

\begin{figure}
	\centering
	\includegraphics[scale=.568	]{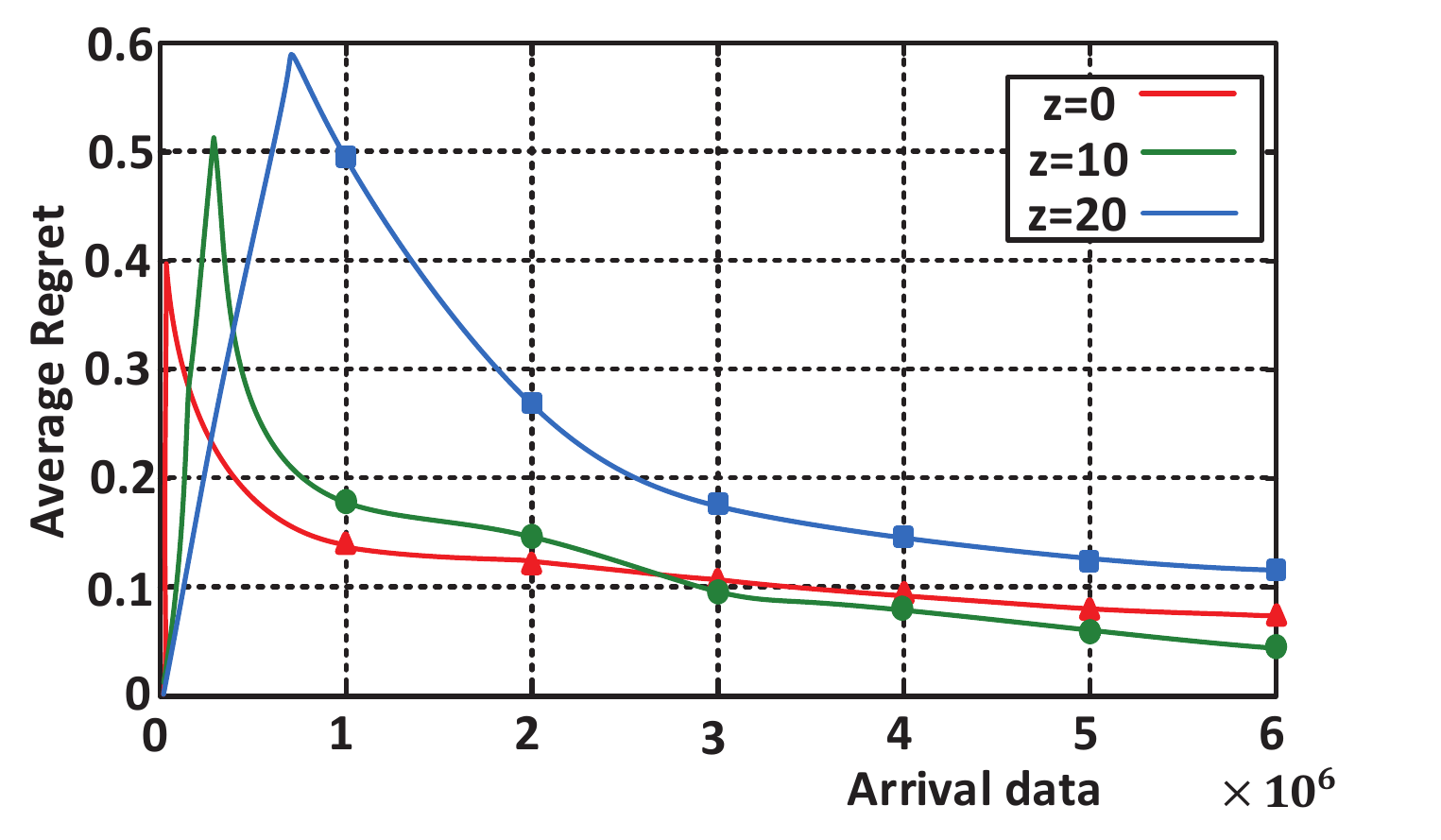}
	\!\caption{\!\!Comparison\!  of\! Average\!  Regret\! with\!  Different\!   $z$}
	\label{fig:digraph7}
\end{figure}


\begin{table}
	\caption{Average Accuracies of DSRHT}
	\begin{center}
		\begin{tabular}{c|cccc}
			\toprule
			\diagbox{Number$\times 10^6 $}{DS factor $z$} & $z=0$ & $z=10$ & $z=20$  \\
			\midrule
			1 & 85.34\% & 82.67\% &  51.10\%  \\
			2 & 87.62\% & 86.98\% &  72.94\%  \\
			3 & 89.92\% & 90.49\% &  81.37\%  \\
			4 & 90.45\% & 91.50\% &  85.79\%  \\
			5 & 91.09\% & 92.03\% &  88.33\%  \\
			6 & 91.87\% & 92.89\% &  89.04\%  \\
			\bottomrule
		\end{tabular}
	\end{center}
\end{table}

In Fig. 8 and Fig. 9 we compare the RHT algorithm with ACR and HCT.
From the Fig. 8 (Regret diagram), we can get that our method is better than the two others which has less regret from the beginning.
The HCT algorithm performs better than ACR when it starts.
With time going on, the ACR's regret comes to be lower than HCT.
From the Fig. 9 (Average Regret diagram), HCT's average regret is less than that of ACR at first, the results also showing that ACR performs slightly better than HCT finally.

Table \Rmnum{2} records the average accuracies which is the total rewards divided by the number of training data (denoted by ``Num").
We find that when the time increases, all the performance of three algorithms can get promoted.
Our algorithm has the highest accuracies during the learning period.
The ACR performs not good when the process starts, whose accuracy is 65.43\% and is worse than that of HCT.
Finally, ACR converges to 88.79\% but HCT is still 83.98\%.
When it comes to our algorithm, it's 91.87\% which is much better than HCT.

Fig. 10 and Fig. 11 analyze the DSRHT algorithm by using different parameters $z$ as 0, 10 and 20.
From the diagrams we find that comparing with $z=0$, $z=10$ is not as well as $z=0$ at the beginning but outperforms it in the long run.
However, when $z=20$,
the algorithm has taken a lot of time to start recommend course precisely.
Even if finally the accuracy of $z=10$ closes to the results that of other two algorithms at the end, the effect is not as well as we expect.

Table \Rmnum{3} illustrates the accuracy more precisely.
When the training number is less than $4 \times 10^6 $, the condition that $z=20$ is the worst in the three conditions.
After that, it come to catch the RHT 91.09\% with 88.33\%.
Thus we can see selecting the distributed storage number cannot pursuit the quantity only, whether it's  makes sense as well in practice.

As for the storage analysis, we use the detailed information of courses to represent courses data,
and the whole course storage is 517.68 TB.
To get more intuition, we use the ratio of actual space occupied and course space occupied to denote storage ratio.
From table \Rmnum{4} we know that ACR\cite{ACR} algorithm is not suitable for real big data since the storage ratio reaches 24.287 TB.
HCT\cite{HCT} algorithm performs well in space complexity which is better than RHT.
As for DSRHT, the storage ratio is 4.118 TB which is less than HCT and nearly half of RHT.

 \begin{table}
 	\caption{Average Storage Cost}	
 	\begin{center}
 		\begin{tabular}{|c|c|c|c|c|} \hline
 			& ACR\cite{ACR} & HCT\cite{HCT} & RHT & \!\!\!\!\! DSRHT \!($z \! = \! 10$) \!\!\!\!\! \\ \hline
 			\!\!\!\!\! Storage Cost (TB) \!\!\!\!\! & 12573 & 2762 & 4123 & 2132 \\ \hline
 			\!\!\!\!\! Storage Ratio \!\!\!\!\! & 24.287 & 5.335 & 7.964 & 4.118 \\ \hline
 		\end{tabular}
 	\end{center}
 \end{table}


\section{Conclusion}
This paper has presented RHT and DSRHT algorithms for the courses recommendation in MOOC big data. Considering the individualization in recommender system, we introduce the context-awareness into our algorithm.
They are suitable for the tremendously huge ad changeable datasets in the future MOOC.
Meanwhile,  they  can achieve  the linear time  and space complexity,  and can achieve the sublinear space complexity in the optimal condition.
Furthermore, we use distributed storage to relieve the storing pressure and make it more suitable for big data. Experiment results
verifies the superior performance of RHT and DSRHT  when comparing with existing related algorithms.

\section*{Appendix A Proof of Lemma 2}
\begin{proof}
	To the first term of in Lemma 1, we take $c_j, \,c_k \in \mathcal{N}_{h,i}^{P_t} $ and  $c_k=c^{P_t*}$ for all context $x_i \in \mathcal{X}$, then we can get that
	\begin{IEEEeqnarray*}{l}
		\!\!\!\!\!\begin{array}{l}
			f({r^{{P_t}}_{c^{P_t*}}}) - f(r_{{c_j}}^{{P_t}}) \le diam(\mathcal{N}_{h,i}^{P_t})+{L_X}{({{\sqrt {{d_X}} } \over {{n_T}}})^\alpha } \\
			\;\,\,\quad\quad\quad\quad\quad\quad\quad\le {k_1}{({m})^h}{{ + }}{L_X}{({{\sqrt {{d_X}} } \over {{n_T}}})^\alpha },
			\IEEEyesnumber \label{eq:Lp2.1}
		\end{array}
	\end{IEEEeqnarray*}
	where $c^{P_t*}$ is the best course whose reward is highest in the context sub-hypercube ${P_t}$.
	We note the event when the path go through the region $\mathcal{N}_{h,i}^{{P_t}}$ as event $\mathcal\{ {N}_{h,i}^{{P_t}} \in \ell _{H,I}^{{P_t*}} \},$
	therefore,
	\begin{IEEEeqnarray*}{l}
		\!\!\!\!\!\begin{array}{l}
			\; \mathbb{P}\left\{ B_{h,i}^{{P_t}}(T^{P_t}) \le f({r^{{P_t}}_{c^*}}){{\;  and \;}}T_{h,i}^{{P_t}}(T^{P_t}) \ge 1\right\}
			\\ \,\,\,\,\,\,=   \mathbb{P}\bigg\{ \hat \mu _{h,i}^{{P_t}}(T^{P_t}) + \sqrt {{k_2}\ln T/{{T}}_{h,i}^{{P_t}}(T^{P_t})}  + {k_1}{({{m}})^h}
			\\\;\;\;\;\;\;\;\;\;\;\;\;\;\;\;\;\;\;\;\;\;\;\;\;\; + {L_X}{({{\sqrt {{d_X}} } \over {{n_T}}})^\alpha }
			\le f({r^{{P_t}}_{c^*}})\;{ and}\;T_{h,i}^{{P_t}}(T^{P_t}) \ge 1\bigg\}
			\\ \,\,\,\,\,\, =  \mathbb{P}\bigg\{ \!\! \left[\hat \mu _{h,i}^{{P_t}}(T^{P_t}\!) \!+\! {k_1}{({{m}})^h} \!+\! {L_X}{({{\sqrt {{d_X}} }  \over {{n_T}}})^\alpha } \!-\! f({r^{{P_t}}_{c^*}})\right]\!{{T}}_{h,i}^{{P_t}}(T^{P_t}\!)
			\\\;\;\;\;\;\;\;\;\;\;\;\;\;\;\;\;\;\;\;\;\;\ \le  - \sqrt {{k_2}(\ln T){{T}}_{h,i}^{{P_t}}(T^{P_t})} \;{{and}}\;T_{h,i}^{{P_t}}(T^{P_t}) \ge 1\bigg\}
			\\
		\end{array}
	\end{IEEEeqnarray*}
	\begin{IEEEeqnarray*}{l}
		\!\!\!\!\!\begin{array}{l}	\,\,\,\,\,\, = \mathbb{P}\bigg\{ \sum\limits_{n = 1}^{T^{P_t}} {\left( r_{c_n}^{{P_t}}(n) - f(r_{{c_n}}^{Pt})\right)\mathbb{I}\left\{ \mathcal{N}_{h,i}^{{P_t}} \in \ell _{H,I}^{{P_t}}\right\} }
			\\
			\;\;\;\;\;\;\;\;\;\;\; +\!\! \sum\limits_{n = 1}^{T^{P_t}} \!\!{\left[\!f(\!r_{{c_n}}^{P_t}) \!+\! {k_1}{{(\!{{m}}\!)}^h} \!\!\!+\!\! {L_X}{{(\!{{\sqrt {{d_X}} } \over {{n_T}}}\!)}^\alpha } \!\!\!\!-\!\! f(\!{r^{{P_t}}_{c^*}}\!)\right]\!}
			{\mathbb{I}\!\left\{\! \mathcal{N}_{h,i}^{{P_t}} \!\in\! \ell _{H,I}^{{P_t}}\! \right\} }
			\\ \;\;\;\;\;\;\;\;\;\,\;\;\;\;\;\;\;\;\;\;\;\;\; \le  - \sqrt {{k_2}(\ln T){{T}}_{h,i}^{{P_t}}(T^{P_t})} \;{{and}}\;T_{h,i}^{{P_t}}(T^{P_t}) \ge 1\bigg\}
			\\ \,\,\,\,\,\, \le \mathbb{P}\bigg\{ \sum\limits_{n = 1}^{T^{P_t}} {( r_{c_n}^{{P_t}}(n) - f(r_{{c_n}}^{Pt}))\mathbb{I}\{ \mathcal{N}_{h,i}^{{P_t}} \in \ell _{H,I}^{{P_t}}\} }
			\\ \;\;\;\;\;\;\;\;\;\;\;\;\;\;\;\;\;\;\;\;\;\;\le  - \sqrt {{k_2}(\ln T){{T}}_{h,i}^{{P_t}}(T^{P_t})} \;{{and}}\;T_{h,i}^{{P_t}}(T^{P_t}) \ge 1\bigg\}.
		\end{array}
	\end{IEEEeqnarray*}	
	
	The last inequation is based on the expression ($\ref{eq:Lp2.1}$), since the second term is positive and we drop it to get the last expression.
	
	For the convenience of illustration, we pick the n when ${\mathbb{I}\left\{ \mathcal{N}_{h,i}^{{P_t}} \in \ell _{H,I}^{{P_t}}\right\} }$ is equal to 1.
	We use
	$\mathord{\buildrel{\lower3pt\hbox{$\scriptscriptstyle\smile$}}\over r} _c^{{P_t}}$
	to indicate the $r_{c_n}^{{P_t}}$ happened in ${\mathbb{I}\left\{ \mathcal{N}_{h,i}^{{P_t}} \in \ell _{H,I}^{{P_t}}\right\} }$.
	Thus,
	\begin{IEEEeqnarray*}{l}
		\!\!\!\!\!\begin{array}{l}
			\mathbb{P}\bigg\{ \sum\limits_{n = 1}^{T^{P_t}} {\left( r_{c_n}^{{P_t}}(n) - f(r_{{c_n}}^{Pt})\right)\mathbb{I}\left\{ \mathcal{N}_{h,i}^{{P_t}} \in \ell _{H,I}^{{P_t}}\right\} }
			\\ \;\;\;\;\;\;\;\;\;\;\;\;\;\;\;\;\;\;\;\;\;\;\; \le  - \sqrt {{k_2}(\ln T){{T}}_{h,i}^{{P_t}}(T^{P_t})} \;{{and}}\;T_{h,i}^{{P_t}}(T^{P_t}) \ge 1\bigg\}
			\\ \,\,\,\,\,\,  \le  \mathbb{P}\bigg\{ \sum\limits_{n = 1}^{T^{P_t}} {\left( r_{c_n}^{{P_t}}(n) - f(r_{{c_n}}^{Pt})\right)\mathbb{I}\left\{ \mathcal{N}_{h,i}^{{P_t}} \in \ell _{H,I}^{{P_t}}\right\} }
			\\ \;\;\;\;\;\;\;\;\;\;\;\;\;\;\;\;\;\;\;\;\;\;\; \le  - \sqrt {{k_2}(\ln T){{T}}_{h,i}^{{P_t}}(T^{P_t})}
			\;{{and}}\;T_{h,i}^{{P_t}}(T^{P_t}) \ge 1\bigg\}
			\\ \,\,\,\,\,\, = \mathbb{P}\Bigg\{ \sum\limits_{n = 1}^{{{T}}_{h,i}^{{P_t}}(T^{P_t})} {\left(\mathord{\buildrel{\lower3pt\hbox{$\scriptscriptstyle\smile$}}\over
					r} _c^{{P_t}} - \mathord{\buildrel{\lower3pt\hbox{$\scriptscriptstyle\smile$}}\over
					r} _{{c_n}}^{Pt}\right)}
			\\ \;\;\;\;\;\;\;\;\;\;\;\;\;\;\;\;\;\;\;\;\;\;\; \le  - \sqrt {{k_2}(\ln T){{T}}_{h,i}^{{P_t}}(T^{P_t})}
			{{\;and}}\;T_{h,i}^{{P_t}}(T^{P_t}) \ge 1\Bigg\} 	
			\\ \,\,\,\,\,\, \le \sum\limits_{n = 1}^{T^{P_t}} {\mathbb{P}\bigg\{ \sum\limits_{j = 1}^n {\left(f(\mathord{\buildrel{\lower3pt\hbox{$\scriptscriptstyle\smile$}}\over
						r} _c^{{P_t}}) - f(\mathord{\buildrel{\lower3pt\hbox{$\scriptscriptstyle\smile$}}\over
						r} _{{c_j}}^{Pt})\right)}  \le  - \sqrt {{k_2}(\ln T)n} \bigg\} }.
		\end{array}
	\end{IEEEeqnarray*}
	
	We consider the situation when $n = 1,2...T_{h,i}^{{P_t}}(T^{P_t})$ and the fact that ${{T}}_{h,i}^{{P_t}}(T^{P_t}) \le T^{P_t}$.
	Besides, the last inequation use the union bound theory and loose the threshold
	\begin{IEEEeqnarray*}{l}
		\!\!\!\!\!\begin{array}{l}
			\sum\limits_{n = 1}^{T^{P_t}} \mathbb{P}\bigg\{ \sum\limits_{j = 1}^n{\left( f(\mathord{\buildrel{\lower3pt\hbox{$\scriptscriptstyle\smile$}}\over
					r} _c^{{P_t}}) - f(\mathord{\buildrel{\lower3pt\hbox{$\scriptscriptstyle\smile$}}\over
					r} _{{c_j}}^{Pt})\right)  \le  - \sqrt {{k_2}(\ln T)n} \bigg\} }
			\\ \,\,\,\,\,\, \le \sum\limits_{n = 1}^{T^{P_t}} {\exp ( - 2{k_2}\ln T)}  \le {(T^{P_t})^{ - 2{k_2} + 1}}.
			\IEEEyesnumber \label{eq:Lp2.4}
		\end{array}
	\end{IEEEeqnarray*}
	Note that the sum of time T represents the contextual sum of time since the number of courses in the context sub-hypercube is stochastic.
	And for the convenience, we use T as the sum of time.
	With the help of Hoeffding-Azuma inequality\cite{HAine}, we get the conclusion.

	With the help of the assumption of range over $q$,
	we can get
	\begin{IEEEeqnarray*}{l}
		\!\!\!\!\!\begin{array}{l}
			{{D_{C(h,i)}^{{P_t}} - {k_1}{{({m})}^h} - {L_X}{{({{\sqrt {{d_X}} } \over {{n_T}}})}^\alpha }} \over 2} \ge \sqrt {{{{{{k}}_2}{{\ln T}}} \over q}}.
			\IEEEyesnumber \label{eq:Lp3}
		\end{array}
	\end{IEEEeqnarray*}
	Thus, the
	\begin{IEEEeqnarray*}{l}
		\!\!\!\!\!\begin{array}{l}
			\mathbb{P}\left\{ B_{h,i}^{{P_t}}(T^{P_t}) > f({r^{{P_t}}_{c^{P_t*}}})\;{{and}}\;T_{h,i}^{{P_t}}(T^{P_t}) \ge q\right\} 	
			\\ \,\,\,\,\,\, = \mathbb{P}\Big\{ \hat \mu _{h,i}^{{P_t}}(T^{\!P_t}\!) \!+\!\! \sqrt {{k_2}\! \ln T/{{T}}_{h,i}^{{P_t}}(T^{\!P_t}\!)}  \!+\! {k_1}{({{{m}}})^h}\!
			\!+\! {L_X}{(\!{{\sqrt {{d_X}} } \over {{n_T}}}\!)^\alpha }
			\\
			\;\;\;\;\;\;\;\;\;\;\;\;\;\;\;\;\;\;\;\;\;\;\; > f(r^{P_t}_{c^{P_t*}(h,i)}) + D_{C(h,i)}^{{P_t}}{{ }}\;{ and}\;T_{h,i}^{{P_t}}(T^{P_t}) \ge q\Big\}
			\\ 	\,\,\,\,\,\, \le \mathbb{P}\Big\{ \hat \mu _{h,i}^{{P_t}}(T^{P_t}) + \sqrt {{{{k_2}{\mathop{ \ln T}\nolimits} } \over q}}  + {k_1}{({{m}})^h} + {L_X}{({{\sqrt {{d_X}} } \over {{n_T}}})^\alpha }
			\\\;\;\;\;\;\;\;\;\;\;\;\;\;\;\;\;\;\;\;\;\;\;\; > f(r^{P_t}_{c^{P_t*}(h,i)}) + D_{C(h,i)}^{{P_t}}\;{ and}\;T_{h,i}^{{P_t}}(T^{P_t}) \ge q\Big\}
			\\ \,\,\,\,\,\, = \mathbb{P}\bigg\{ \![\hat \mu _{h,i}^{{P_t}}(T^{\!P_t\!}) \!-\! f(r^{P_t}_{\!c^{P_t\!*\!}(h,i)\!})] \!\!>\!\! [\!{{D_{C(h,i)}^{{P_t}} \!-\! {k_1}{{({{m}})}^h} \!-\! {L_X}{{({{\sqrt {{d_X}} } \over {{n_T}}})}^\alpha }} \over 2}\!]
			\\\;\;\;\;\;\;\;\;\;\;\;\;\;\;\;\;\;\;\;\;\;\;\;\;\;\;\;\;\;\;\;\;\;\;\;\;\;\;\;\;\;\;\;\;\;\;\;\;\;\;\;\;\;\;\;\;\;\;\;\; { and}\;T_{h,i}^{{P_t}}(T^{P_t}) \ge q\bigg\}.
		\end{array}
	\end{IEEEeqnarray*}
	
	When we multiply $T_{h,i}^{{P_t}}(T^{P_t})$ with both sides, we can get the inequations below.
	
	\begin{IEEEeqnarray*}{l}
		\!\!\!\!\!\begin{array}{l}
			\mathbb{P}\bigg\{ \![\hat \mu _{h,i}^{{P_t}}(T^{\!P_t\!}) \!-\! f(r^{P_t}_{\!c^{P_t\!*\!}(h,i)\!})] \!\!>\!\! [\!{{D_{C(h,i)}^{{P_t}} \!-\! {k_1}{{({{m}})}^h} \!-\! {L_X}{{({{\sqrt {{d_X}} } \over {{n_T}}})}^\alpha }} \over 2}\!]
			\\\;\;\;\;\;\;\;\;\;\;\;\;\;\;\;\;\;\;\;\;\;\;\;\;\;\;\;\;\;\;\;\;\;\;\;\;\;\;\;\;\;\;\;\;\;\;\;\;\;\;\;\;\;\;\;\;\;\;\;\; { and}\;T_{h,i}^{{P_t}}(T^{P_t}) \ge q\bigg\}
		\end{array}
	\end{IEEEeqnarray*}
	\begin{IEEEeqnarray*}{l}
		\!\!\!\!\!\begin{array}{l}
			\\  \,\,\,\,\,\, = \mathbb{P}\Big\{ \sum\limits_{n = 1}^{T^{P_t}} {( r_n^{{P_t}}(n) - f(r_{h,i}^{Pt}))\mathbb{I}\{ \mathcal{N}_{h,i}^{{P_t}} \in \ell _{H,I}^{{P_t}}\} }
			\\ \;\;\quad\quad \!>\!\! [\!{{D_{\!C(h,i)}^{{P_t}} \!-\! {k_1}{{({{{m}}})}^h} \!-\! {L_{\!X}}{{\!({{\sqrt {{d_X}} } \over {{n_T}}})}^\alpha }} \over 2}\!] T_{h,i}^{{P_t}}(T^{\!P_t\!})\;
			{{and}}\;T_{h,i}^{{P_t}}(T^{\!P_t\!}) \!\!\ge\!\! q\Big\}.
		\end{array}
	\end{IEEEeqnarray*}
	
	With the union bound and the Hoeffding-Azuma inequality\cite{HAine}, we can get that
	\begin{IEEEeqnarray*}{l}
		\!\!\!\!\!\begin{array}{l}
			\mathbb{P}\Big\{ \sum\limits_{n = 1}^{T^{P_t}} {\left( r_n^{{P_t}}(n) - f(r_{{c_n}}^{Pt})\right)\mathbb{I}\left\{ \mathcal{N}_{h,i}^{{P_t}} \in \ell _{H,I}^{{P_t}}\right\} }
			\\ \quad\quad \!>\!\! [\!{{D_{\!C(h,i)}^{{P_t}} \!-\! {k_1}{{({{{m}}})}^h} \!-\! {L_{\!X}}{{\!({{\sqrt {{d_X}} } \over {{n_T}}})}^\alpha }} \over 2}\!] T_{h,i}^{{P_t}}(T^{\!P_t\!})\;
			{{and}}\;T_{h,i}^{{P_t}}(T^{\!P_t\!}) \!\!\ge\!\! q\Big\}
			\\  \,\,\,\,\,\, \le {({{T^{P_t}}})^{ - 2{k_2} + 1}}.
		\end{array}
	\end{IEEEeqnarray*}
	
	According to Lemma 1 and the prerequisite in Lemma 2, we select upper bound of $q$ as
	${{4{k_2}{{\ln T}}} \over {{{\left[D_{C(h,i)}^{{P_t}} \!-\! {k_1}{{({m})}^h} \!-\! {L_X}{{({{\sqrt {{d_X}} } \over {{n_T}}})}^\alpha }\right]}^2}}} \!+\! 1$.
	Thus,
	\begin{IEEEeqnarray*}{l}
		\!\!\!\!\!\begin{array}{l}
			\mathbb{E}\!\left[T_{h,i}^{{P_t}}(T^{P_t})\right]
			\!\le\!\! \sum\limits_{n = q + 1}^{T^{P_t}} {\mathbb{P}\bigg\{\!\! \left[B_{h,i}^{{P_t}}(n) \!>\! f({r^{{P_t}}_{c^{P_t*}}}){{\; and}}\;T_{h,i}^{{P_t}}(n) \!>\! q\right]}
			\\\;\;\;\;\;\;\;\;\;\;\;\;\;\;\;\;\;\;\;\;\;\;\; {{ \;or}}\ \;\!\!\!\! \left[\!B_{j,{i_{h'\!\!}}}^{{P_t}}(n) \!\!\le\! f(\!{r^{{P_t}}_{c^{P_t*}}}\!){{\;f\!or}}\;{{j}} \!\in\! \{ q \!\!+\!\! 1,...,n \!\!-\!\! 1\} \right]\!\!\bigg\}
			\\\;\;\;\;\;\;\;\;\;\;\;\;\;\;\;\;\;\;\;\;\;\;\;\;\;\;\;\;\;\;\;\;\;\;\;\;\;\;\; + {{4{k_2}{{\ln T}}} \over {{{\left[D_{C(h,i)}^{{P_t}} - {k_1}{{({m})}^h} - {L_X}{{({{\sqrt {{d_X}} } \over {{n_T}}})}^\alpha }\right]}^2}}} +1
			\\ \;\;\;\;\;\;\;\;\;\;\;\;\;\;\;\;\;\;\;\;\;  \le {{4{k_2}{{\ln T}}} \over {{{\left[D_{C(h,i)}^{{P_t}} - {k_1}{{({m})}^h} - {L_X}{{({{\sqrt {{d_X}} } \over {{n_T}}})}^\alpha }\right]}^2}}}
			\!+\!  1
			\\\;\;\;\;\;\;\;\;\;\;\;\;\;\;\;\;\;\;\;\;\;\;\;\;\;\;\;\;\;\;\;\;\;\;\;\;\;\;\;\;\;\; +\!\sum\limits_{n = q + 1}^{T^{P_t}} {\left[{{(T^{P_t})}^{ - 2{k_2}{{ + 1}}}}{{ \!+\! }}{{{n}}^{ - 2{k_2} + 2}}\right]}.
		\end{array}
	\end{IEEEeqnarray*}
	And we take the constant ${k_2} \ge 1$,
	\begin{IEEEeqnarray*}{l}
		\!\!\!\!\!\begin{array}{l}
			1 + \sum\limits_{n = q + 1}^{T^{P_t}} {\left[{{(T^{P_t})}^{ - 2{k_2}{{ + 1}}}}{{ + }}{{{n}}^{ - 2{k_2} + 2}}\right]}  \le 4 \le M,
			\IEEEyesnumber \label{eq:Lp3.5}
		\end{array}
	\end{IEEEeqnarray*}
	
	thus we can get the conclusion Lemma 2.	
\end{proof}

\section*{Appendix B Proof of Theorem 2}
\begin{proof}
	Based on the segmentation, the regret can be presented with
	\begin{IEEEeqnarray*}{l}
		\!\!\!\!\!\begin{array}{l}
			\mathbb{E}[R({T})] = \mathbb{E}[{R_1}({T})] + \mathbb{E}[{R_2}({T})] + \mathbb{E}[{R_3}({T})] + \mathbb{E}[{R_4}({T})].
		\end{array}
	\end{IEEEeqnarray*}
	For $\mathbb{E}[{R_1}({T})]$, since it's the same as the Algorithm 1, so we can get the first term as
	\begin{IEEEeqnarray*}{l}
		\!\!\!\!\!\begin{array}{l}
			\mathbb{E}[R_1 (T)] \le  {4\left[k_1 ({m } )^H  + L_X ({{\sqrt {d_X } } \over {n_T }})^\alpha  \right]T}  .
			\IEEEyesnumber \label{eq:T2.1}
		\end{array}
	\end{IEEEeqnarray*}
	
	The depth is from $z$ to $H$, revealing that $H>z$.
	To satisfy this, we suppose
	${2^H} \ge {\left({T \over {\ln T}}\right)^{{{{d_X} + \alpha{d_C} } \over {{d_X} + \alpha({d_C} + 2)}}}}.$
	Since the exploration process started from depth $z$, the depth we can select satisfy the inequation above.
	Thus the second term's regret bound is
	\begin{IEEEeqnarray*}{l}
		\!\!\!\!\!\begin{array}{l}
			\mathbb{E}[{R_2}({T})]
			\le \sum\limits_{{P_t}} {\sum\limits_{h = z}^H {4\left[{k_1}{{({m})}^h} + {L_X}{{({{\sqrt {{d_X}} } \over {{n_T}}})}^\alpha }\right]} \left|\phi _h^{{P_t}}\right|}
			\\\;\;\;\;\;\;\;\;\;\;\;\;\;\;\; \le {{4K{{({n_T})}^{{d_X}}}} \over {{{\left[{k_1}{{({m})}^h}\right]}^{{d_C}}}}}\sum\limits_{h = z}^H {4\left[{k_1}{{({m})}^h} + {L_X}{{({{\sqrt {{d_X}} } \over {{n_T}}})}^\alpha }\right]}.
			\IEEEyesnumber \label{eq:T2.2}
		\end{array}
	\end{IEEEeqnarray*}
	
	We choose the context sub-hypercube whose regret bound is biggest to continue the inequation ($\ref{eq:T2.2}$).
	And as for the third term, the regret bound is
	\begin{IEEEeqnarray*}{l}
		\!\!\!\!\!\begin{array}{l}
			\;\mathbb{E}[{R_3}({T})]
			\!\le\! \sum\limits_{{P_t}} \!{\sum\limits_{{{h}} = z}^H \!{4\! \left[{k_1}{{({m})}^{h-1}} \!\!\!+\!\! {L_X}{{({{\sqrt {{d_X}} } \over {{n_T}}})}^\alpha }\!\right]} \!\! \sum\limits_{\mathcal{N}_{h,i}^{{P_t}} \in \Gamma^{P_t}_3} \!{\left|{{(\phi _h^{{P_t}})}^c} \right|} } .
		\end{array}
	\end{IEEEeqnarray*}
	
	We notice that since the regions in $\Gamma _3^{{P_t}}$ is the child region of $\Gamma _2^{{P_t}}$.
	To be more specific, in the binary tree, the child regions is more than parent regions but less than twice, thus the number of top regions in $\Gamma _3^{{P_t}}$ is less than twice of $\Gamma _2^{{P_t}}$.
	\begin{IEEEeqnarray*}{l}
		\!\!\!\!\!\begin{array}{l}
			\; \sum\limits_{{P_t}} {\sum\limits_{{{h}} = z}^H {4\left[{k_1}{{({m})}^{h-1}} + {L_X}{{({{\sqrt {{d_X}} } \over {{n_T}}})}^\alpha }\right]} \sum\limits_{\mathcal{N}_{h,i}^{{P_t}} \in {\Gamma^{P_t}_3}} {\left|{{(\phi _h^{{P_t}})}^c}\right|} } \;\;
			\\ \;\,\,\,\,\,\, \le \sum\limits_h \Bigg\{ {{32k_2 K(n_{T} )^{d_{X} } \ln T} \over {\left[ {k_1 (m)^h } \right]^{d_C  + 1} \left[ {k_1 (m)^h  + L_{X} ({{\sqrt {d_X } } \over {n_T }})^{\alpha } } \right]}} \\
			\quad \quad \quad \quad \quad \quad \quad \quad \quad \quad \; + {{8MK(n_{T} )^{d_{X} } \left[ {k_1 (m)^h  + L_{X} ({{\sqrt {d_X } } \over {n_T }})^{\alpha } } \right]} \over {m\left[ {k_1 (m)^h } \right]^{d_C } }} \Bigg\}. 	
		\end{array}
	\end{IEEEeqnarray*}
	From the upper bounds of regret $\mathbb{E}[{R_1}({T})]$, $\mathbb{E}[{R_2}({T})]$, $\mathbb{E}[{R_3}({T})]$, we can get that the three upper bound is the same as algorithm RHT.
\end{proof}

\end{document}